%% file: main_arxiv.tex
\documentclass[lettersize,journal]{IEEEtran}
\usepackage{amsmath,amsfonts}
\usepackage{algorithmic}
\usepackage{algorithm}
\usepackage{array}
\usepackage[caption=false,font=normalsize,labelfont=sf,textfont=sf]{subfig}
\usepackage{textcomp}
\usepackage{stfloats}
\usepackage{url}
\usepackage{verbatim}
\usepackage{graphicx}

\usepackage[pagebackref,breaklinks,colorlinks, citecolor=nicegreen,bookmarks=false]{hyperref}
\usepackage{subcaption}
\usepackage{xcolor}   
\definecolor{grayish}{rgb}{0.95, 0.95, 0.95}
\definecolor{nicegreen}{rgb}{0.1, 0.7, 0.1}
\usepackage{mdframed}
\usepackage[table]{xcolor}
\usepackage{minibox}
\usepackage{svg}
\usepackage{amsmath}
\usepackage{amssymb}
\usepackage{mathtools}
\usepackage{amsthm}
\usepackage{booktabs} 
\usepackage{multirow} 
\usepackage{tcolorbox}
\usepackage[table]{xcolor}

\usepackage{cite}
\usepackage{cleveref}
\theoremstyle{plain}
\newtheorem{theorem}{Theorem}

\theoremstyle{definition}

\theoremstyle{remark}

\definecolor{LightBlue}{rgb}{0.88,1,0.88}
\definecolor{beaublue}{rgb}{0.9, 0.95, 0.9}
\definecolor{grayish}{rgb}{0.95, 0.95, 0.95}
\definecolor{blackish}{rgb}{0.2, 0.2, 0.2}
\definecolor{ao(english)}{rgb}{0.0, 0.5, 0.0}

\begin{document}

\title{Unifying Adversarially Robust Model Experts \\ in Vision-Language Models}

\author{Nguyen Duc Thai, Junhao Dong, Sua Qi Rong, Hua Yu, Yew-Soon Ong, \textit{Fellow, IEEE} 
\thanks{Nguyen Duc Thai, Junhao Dong, Sua Qi Rong, Hua Yu, and Yew-Soon Ong are with the College of Computing and Data Science, Nanyang Technological University, Singapore. Yew-Soon Ong, and Junhao Dong are also with the Center for Frontier AI Research, Agency for Science, Technology and Research (A*STAR), Singapore (e-mail: \{ducthai001, suaq0001, junhao003, yu\_hua, asysong\}@ntu.edu.sg.}
}

\markboth{Journal of \LaTeX\ Class Files,~Vol.~14, No.~8, August~2021}%
{Shell \MakeLowercase{\textit{et al.}}: A Sample Article Using IEEEtran.cls for IEEE Journals}


\maketitle

\begin{abstract}
Vision-language models (VLMs), such as CLIP, are vulnerable to adversarial attacks, posing a serious problem for real-life applications and deployment. Adversarial fine-tuning emerges as a prominent defense method; however, different fine-tuning strategies often produce specialized models with distinct robustness characteristics. Each fine-tuned model in turn thrives in some evaluation settings but falters on others, limiting their defensive capabilities. We refer to these specialized fine-tuned models as \emph{robust model experts} and propose a collaborative adversarial fine-tuning framework: \textbf{CARE} - \textbf{C}ollaborative \textbf{A}dversarial \textbf{R}obustness fine-tuning using \textbf{E}mbedding alignment. CARE maintains multiple experts during training, enables knowledge exchange through embedding-space harmonization, and consolidates the learned knowledge into a single unified robust model. Experts benefit from one another while preserving their individual specializations, enabling the final model to inherit complementary robustness properties. In this paper, we demonstrate CARE on two different adversarial fine-tuning strategies with complementary robustness behaviors. Extensive experiments on classic image classification and downstream vision-language tasks display the effectiveness of our approach, with CARE being able to outperform individually learned model experts. The results suggest that collaborative learning across model experts is a promising direction for improving adversarial robustness.
\end{abstract}

\begin{IEEEkeywords}
Adversarial training, In-distribution accuracy, Zero-shot accuracy, Collaborative learning.
\end{IEEEkeywords}

\section{Introduction}
\label{sec:introduction}
Vision-language models, such as CLIP~\cite{radford2021learning}, have achieved considerable success in many tasks, such as image captioning, text-to-image generation, or text-to-video generation. This success is due to their ability to align visual and textual representations in a shared embedding space. However, CLIP, as well as foundational multimodal models, are vulnerable to adversarial attack \cite{schlarmann2023adversarial}. Subtle, human-imperceptible perturbations can drastically alter predictions, posing serious security risks, such as performance drops, generation of harmful content, or even creating backdoors that attackers can exploit.


\begin{figure}[t]                    
  \centering
  \includegraphics[width=\linewidth]{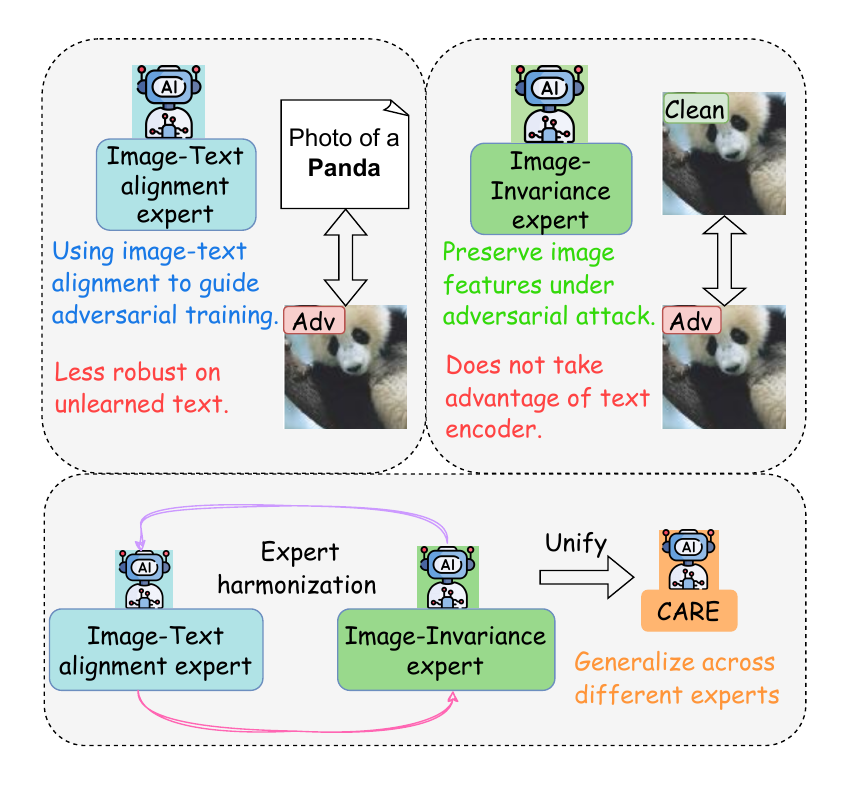}
  \caption{We identify two prominent approaches to adversarial fine-tuning for VLMs. The first relies on image-text alignment to guide the adversarial training process. This approach cannot guarantee alignment on untrained text, and therefore loses effectiveness on new classes. The second leverages VLMs rich visual representations and teaches the model to stay consistent even under adversarial attack. While it is more general, it does not take advantage of the shared representation that is trademark of VLMs. Our framework, CARE, works to combine these different specialized experts. CARE harmonizes different experts, exchanging learned knowledge between them before unifying them into a single final model.}
  \label{fig:expert_combination}
\end{figure}

Therefore, adversarial robustness is essential for the safe and widespread deployment of VLMs. Among many adversarial defense methods, adversarial training remains one of the most effective strategies \cite{madry2017towards} \cite{zhang2019theoretically}. Through training or fine-tuning the model with compromised data samples, the model becomes robust and invariant to adversarial perturbations. While adversarial training is hardly a new method, the advances of VLMs bring about new defense opportunities. 

First, VLMs perform image-text alignment in a shared embedding space. Thus, text embeddings and prompts can be used to guide the adversarial fine-tuning process. This provides a new defense surface not found in earlier CNNs. Many recent adversarial training methods apply this principle, such as TeCoA~\cite{mao2022understanding} or TGA-ZSR~\cite{yu2024text}. TeCoA fine-tunes using images classification, with adversarial images created to maximize the cross-entropy loss from text and image alignment. TGA explores text-guided attention to improve robustness, but at its core the main idea is the same: relying on the shared image-text embedding space for guidance.

Second, the larger size and compute power of VLMs, compared to older CNNs, mean that their visual representations are much richer and they can be adversarially trained on proxy tasks, not necessarily image classification. A prominent example is FARE~\cite{schlarmann2023adversarial}: adversarial images are created to maximize distortion in the image embedding space. The model is trained without image-text supervision, with the sole aim of preserving visual representation, and thus bolster the invariance of image embeddings under attack.

We refer to an adversarially trained model as a robust model expert. Depending on the training angle they take, fine-tuned models exhibit different characteristics and their performance varies under different evaluation settings. For example, models that follow the image-text route often performs better on learned class names, since they rely on the text prompts for robustness. Given a new class, they struggle to maintain robustness. On the other hand, models that focus on improving the image-invariance aspect can transfer their robustness to unseen datasets. However, since the image-text alignment is not directly maintained during training, their performance on known classes are lackluster compared to image-text experts. Furthermore, they also do not make use of the trademark feature of VLMs: the shared multimodal representation between image and text.

How can we generalize across different adversarial robust model experts? A direct solution would be to train independent experts, then combine them afterward. However, since different experts are optimized on different strategies, they naturally drift apart during training, and unifying them is unstable. Works that explore combining different models for adversarial training, such as \cite{wang2023generalist} and \cite{dong2024adversarially}, all stumble upon this roadblock. Previous works then have to periodically overwrite the experts' parameters completely using a shared checkpoint to maintain stability. We find this approach limits the exploration and specialization of the experts. To overcome this, we propose CARE: \textbf{C}ollaborative \textbf{A}dversarial \textbf{R}obustness fine-tuning using \textbf{E}mbedding alignment. CARE trains multiple experts simultaneously. Each expert follows its specialization, while a harmonization mechanism is used to exchange knowledge, thus preventing drift and improving stability. The learned robustness is then unified into a single model, thereby generalizing across different adversarial training strategies.

\IEEEpubidadjcol

The contributions of this work are as follows:

\begin{itemize}
    \item We propose the CARE framework for learning across adversarial robust model experts. CARE maintains multiple experts, each derived from a different adversarial fine-tuning strategy, and unifies their learned knowledge into a single unified robust model.
    \item We introduce an embedding-space harmonization mechanism that enables knowledge exchange among experts during training, thus improving stability without sacrificing exploration.
    \item Extensive experiments demonstrate that CARE outperforms individually trained experts in both image classification and downstream vision–language tasks.
\end{itemize}

\section{Related Work}


Vision-language models, such as CLIP \cite{radford2021learning}, ALIGN \cite{jia2021scaling}, ALBEF \cite{li2021align}, are trained on large image-text datasets. Each modality has its own encoder, and the model aligns the multimodal embeddings within a shared latent space to provide strong image-text representations. Harnessing the expressivity of these representations, vision-language models have achieved considerable success in multimodal tasks, such as zero-shot image classification, image-text retrieval, and image captioning or video captioning. In addition, many Large Vision-Language Models (LVLM), such as Flamingo \cite{alayrac2022flamingo}, OpenFlamingo \cite{awadalla2023openflamingo}, LLaVa \cite{liu2023visual}, etc., employ them as vision encoders to handle image-related tasks.

However, VLMs are not immune to adversarial attack. Multiple works have explored the vulnerability to attack of LVLMs and VLMs \cite{qi2023visual} \cite{carlini2023aligned} \cite{zhao2023evaluating} \cite{gu2024agent}. Among existing defense strategies, adversarial training is one of the most effective. Existing methods often falls into one of two categories: leveraging VLMs image-text latent space or relying on their large architecture and rich visual representations for robustness.

The first category can be best presented in the paper by Mao et al. \cite{mao2022understanding}. The authors investigated TeCoA, which leverages the strong image-text alignment of CLIP and using the text encoder to guide the fine-tuning process. Other works, such as PMG~\cite{wang2024pre} or TGA~\cite{yu2024text}, add additional losses such as image-text alignment within the non-fine-tuned model or text-guided attention loss into the mix, but the main principle remains: they target the image-text alignment within CLIP for adversarial training. Schlarmann et al. in \cite{schlarmann2024robust} displayed the second strategy. They suggested FARE, which adversarially trains CLIP by minimizing the distance between image embeddings of clean and adversarial samples. While the training does not directly address adversarial image classification, the strong image embeddings gained after fine-tuning can be transferred to a multitude of tasks. As stated in Section~\ref{sec:introduction}, we investigate these two different philosophies.

We choose TeCoA and FARE as the bases to demonstrate our framework, as we find them the representatives of the two approaches. We notice that in image-text alignment experts, the logits between image and text embeddings is the primary target for adversarial fine-tuning, while for image-invariance experts, it is done directly through the image encoder, without any image-text-focused objectives. This insight motivates us to derive the harmonization mechanism of CARE: exchanging adversarial knowledge of adversarially fine-tuned embeddings.

Our approach bears resemblance to generalization through model merging \cite{wortsman2022model} \cite{ilharco2022editing} \cite{ortiz2023task}. Pretrained models are combined in parameter space to improve performance, balance trade-offs, etc., without any increase in inference cost. In the realm of adversarial training, Wang et al. \cite{wang2023generalist} proposed an adversarial training framework to balance the clean-robustness trade-off. Each task is handled by a dedicated expert: one focuses on clean accuracy while the other aims to improve robustness; their parameters are later merged to form a unified model that aggregates knowledge from all tasks. However, the authors revealed that due to differing specialization, the merging process can become unstable, leading to worse results. Their solution is to perform parameter overwriting, where the parameter of the unified model is copied on top of each expert. In this regard, we use our own knowledge exchanging process to harmonize the experts and prevent divergence.

\section{Collaborative Adversarial Fine-tuning}

In this section we introduce our adversarial fine-tuning framework CARE. The framework performs adversarial fine-tuning across robust model experts and maintains harmonization through embedding alignment. An overview of the method is presented in Figure~\ref{fig:care_method}.

\begin{figure*}[t]                    
  \centering
  \includegraphics[width=\linewidth]{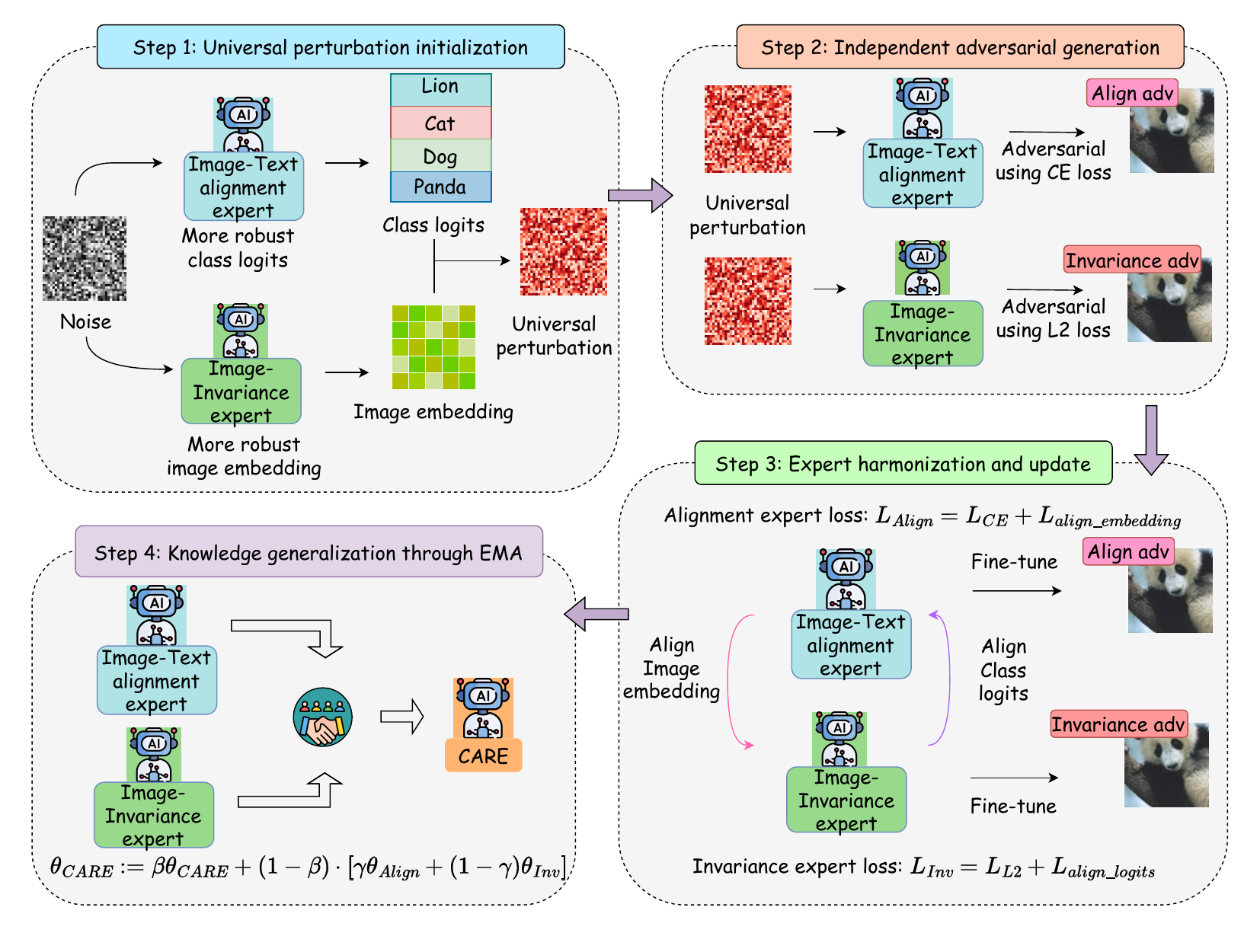}
  \caption{Our method CARE consists of 4 steps: Initialize a universal perturbation with inputs from both experts; Each expert generates its own adversarial samples; Experts optimize their parameters using joint losses: adversarial fine-tuning loss for specialization and harmonization loss to learn from each other; Finally, the parameters are combined using the Exponential Moving Average (EMA) method. The details for each expert's loss can be found in Eq.~\eqref{eq:ce_branch_loss} and Eq.~\eqref{eq:l2_branch_loss}.}
  \label{fig:care_method}
\end{figure*}

\subsection{Preliminaries}
\label{sub:method_premilinaries}

CLIP, introduced by Radford et al. \cite{radford2021learning}, is a breakthrough in vision-language pretrained models. CLIP consists of two encoders: an image encoder \(f_{\theta}: X \to \mathbb{R}^{d}\) and a text encoder \(f_{\phi}: T \to \mathbb{R}^{d}\), sharing the same dimensionality $d$. Here $X$ and $T$ represent the input spaces for images and text, and $\theta$ and $\phi$ represent the model parameters for the image and text encoder, respectively.
Zero-shot classification of an image $x$ with $C$ output classes is done by generating the class prompts set $\mathbf{t} = \{{t}_{1}, {t}_{2}, \dots, {t}_{C}\}$ describing the image i.e. ${t}_{c} = $ \texttt{``This is a photo of a <class c>``}. Cosine similarity between the image embedding of $x$ and the text embedding of $t$ is used to compute each class logit.
\begin{align}
    z_{c}(x,\theta, \phi) &= \cos(f_{\theta}(\mathbf{x}), f_{\phi}(\mathbf{t_{c}})) \quad \text{for } c=1,\ldots,C.
\end{align}
Here $z_{c}(x,\theta, \phi)$ is the logit for class $c$ and the logit vector is $\mathbf{z(x,\theta, \phi)}=[z_{1}(x,\theta, \phi),\ldots,z_{C}(x,\theta, \phi)]$. We can then compute the class probability using the softmax function.
\begin{align}
    p_{c}(x, \theta, \phi) &= \frac{\exp \left(z_c\right)}{\sum_{c'=1}^{C} \exp \left(z_{c'}\right)}.
\end{align}
$\mathbf{p_{\theta,\phi}(x)} = [p_1(x,\theta,\phi),\ldots,p_C(x,\theta,\phi)]$ is the prediction vector. Suppose that the ground-truth label of $x$ is $y$, the aim of an adversarial attack in the $\ell_p$-norm threat model is then to generate an adversarial sample $\tilde{x}$ such that the model misclassifies $\tilde{x}$ into another class:
\begin{align}
    \arg\max_{c=1,\ldots,C} \mathbf{p_{\theta,\phi}(x) \neq y}, | \mathbf{\tilde{x}} - \mathbf{x} |_p \leq \varepsilon, \mathbf{\tilde{x}} \in X ,
\end{align}
where $\epsilon$ is the perturbation radius. Following prior works, we focus on $\ell_\infty$-norm threats.

Mao et al. in \cite{mao2022understanding} introduced TeCoA, a text-guided adversarial fine-tuning approach for CLIP on ImageNet. TeCoA minimizes the cross-entropy between the image-text logits of the adversarial samples and the true labels: $L_{CE}( z(\tilde{x},\theta,\phi), y)$. The robust training of TeCoA can be written as a minimax optimization problem:
\begin{align}
    \theta_{TeCoA} &= \arg\min_{\theta}\max_{|\tilde{x} - x|_{\infty} \leq \epsilon} L_{CE}(z(\tilde{x},\theta,\phi),y).
\end{align}

Schlarmann et al. in \cite{schlarmann2024robust} suggested an unsupervised approach to robust fine-tuning using $\ell_2$-distance loss, anchoring the adversarial image embeddings to that of the original clean embeddings, preventing drift in image feature space:
\begin{align}
    L_{L2}(f_{\theta}(\tilde{x}), f_{\theta_{Org}}(x)) = ||f_{\theta}(\tilde{x}) - f_{\theta_{Org}}(x)||_{2}^2.
\end{align}
FARE enforces image feature invariance between clean and adversarial samples, therefore promoting robustness. The robust formulation of FARE is the following:
\begin{align}
    \theta_{FARE} &= \arg\min_{\theta}\max_{|\tilde{x} - x|_{\infty} \leq \epsilon} L_{L2}(f_{\theta}(\tilde{x}), f_{\theta_{Org}}(x)).
\end{align}

In both TeCoA and FARE, the inner maximization problem is approximated using Projected Gradient Descent (PGD) \cite{madry2017towards}. We denote the cross-entropy loss and the $\ell_2$-distance loss as $L_{CE}(\tilde{x}, y)$ and $L_{L2}(\tilde{x}, x)$. Since both FARE and TeCoA optimize the image encoder only, we simplify the notation for logits and class probability into $z(x, \theta)$ and $p(x, \theta)$.

We choose these two to be the bases for our framework due to their isolated focus on different components in the CLIP architecture: TeCoA uses text features to improve robustness, while FARE uses image features. Our framework, CARE, maintains separate networks with different specializations: Image-text alignment expert, based on TeCoA, or \textbf{Alignment expert} for short; and Image-invariance expert, based on FARE, or \textbf{Invariance expert} for short. The Alignment expert utilizes cross-entropy loss for adversarial training, while the Invariance expert relies on $\ell_2$-distance loss. In this work we focus on adversarial training for the image encoder only, and we denote the corresponding parameters of these two image encoders as $\theta_{Align}$ and $\theta_{Inv}$, respectively. A final network CARE, with parameters $\theta_{CARE}$, is produced by unifying the two experts.

The three networks $\theta_{Align}$, $\theta_{Inv}$, and $\theta_{CARE}$ are initialized using the same pretrained CLIP image encoder. $\theta_{Align}$ and $\theta_{Inv}$ are optimized directly during training. Both experts utilize the same clean data sample to generate their distinct adversarial samples using $K$-step PGD. We apply Exponential Moving Average (EMA) after each training step to unify learned knowledge from the two:
\begin{align}
    \theta_{CARE} := \beta \theta_{CARE} + (1-\beta)[\gamma\theta_{Align} + (1-\gamma)\theta_{Inv}].
\end{align}
Here, $0 \leq \beta \leq 1$ is the decay rate and $0 \leq \gamma \leq 1$ controls the influence of each expert.

However, the experts with their unique specialization will diverge too much from each other, thus leading to an unstable combination. We propose an Expert harmonization method to allow robust optimization while maintaining synchronization.

\subsection{Expert harmonization}

Our Expert harmonization mechanism aligns each expert to implicitly learn the adversarial fine-tuning process of the other. We observe that $\theta_{Align}$ optimizes the cross-entropy loss by acting directly on the image-text. Meanwhile, $\theta_{Inv}$ optimizes the L2 loss by operating on the image embeddings. We add an Embedding alignment loss to each expert: $\theta_{Inv}$ will align its clean logits (which it does not optimize) to those of $\theta_{Align}$ (which optimizes logits directly). Likewise, the $\theta_{Align}$ network will align its clean image embeddings to those of $\theta_{Inv}$. This cross-learning mechanism enhances the robustness of each expert using both the image and text encoders and promotes consistency between their optimization dynamics.

Given an input image $x$ with label $y$, we denote $\tilde{x}_{Align}$ and $\tilde{x}_{Inv}$ the adversarial samples that the experts generate independently by maximizing the cross-entropy loss and the $\ell_2$-distance loss as described in Section~\ref{sub:method_premilinaries}. $L_{align\_z}$ is the loss for logits alignment, used by the Invariance expert. $L_{align\_f}$ is the embedding alignment loss, used by the Alignment expert. We use Cosine similarity loss by minimizing the negative cosine similarity for alignment losses.
\begin{equation}
    L_{align\_z}(x, \theta_{Align}, \theta_{Inv}) = - \cos(z(x, \theta_{Inv}), z(x, \theta_{Align}))),
\end{equation}
\begin{equation}
    \begin{split}
        L_{Inv} = L_{L2}(f_{\theta_{Inv}}(\tilde{x}_{Inv}), f_{\theta_{Org}}(x))
        \\ \quad + L_{align\_z}(x, \theta_{Align}, \theta_{Inv})    ,
    \end{split}
    \label{eq:l2_branch_loss}
\end{equation}
\begin{equation}
    L_{align\_f}(x, \theta_{Align}, \theta_{Inv}) = - \cos(f_{\theta_{Align}}(x), f_{\theta_{Inv}}(x)),
\end{equation}
\begin{equation}
    \begin{split}
        L_{Align} = L_{CE}(z(\tilde{x}_{Align}, \theta_{Align}), y)
        \\ \quad + L_{align\_f}(x, \theta_{Align}, \theta_{Inv})   .
    \end{split}
    \label{eq:ce_branch_loss}
\end{equation}

\begin{theorem}
\label{thm:branch_momentum}
Assume that $L_{align\_z}(x, \theta_{Align}, \theta_{Inv})$ and $L_{align\_f}(x, \theta_{Align}, \theta_{Inv})$ are twice differentiable in the neighborhood of $\theta$. Let $H_{z}(\theta):= \nabla^{2}_{\theta_{Inv}}L_{align\_z}(x, \theta_{Align}, \theta_{Inv})$ and $H_{f}(\theta_{Align}) := \nabla^{2}_{\theta}L_{align\_f}(x, \theta_{Align}, \theta_{Inv})$, for sufficient small drift between the two model parameter $\theta_{Align}$ and $\theta_{Inv}$, the auxiliary losses act as momentums pulling the experts toward each other, where the gradient updates are:
\begin{equation}
    \nabla_{\theta}L_{align\_z}(x, \theta_{Align}, \theta_{Inv}) = H_{z}(\theta_{Align})(\theta_{Inv} - \theta_{Align}),
\end{equation}
\begin{equation}
    \nabla_{\theta}L_{align\_f}(x, \theta_{Align}, \theta_{Inv}) = H_{f}(\theta_{Inv})(\theta_{Align} - \theta_{Inv}).
\end{equation}
\end{theorem}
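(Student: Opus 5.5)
The argument is a first-order Taylor expansion of the gradient of each alignment loss about the point where the two experts coincide. The key observation is that this point is a stationary point. When $\theta_{Inv}=\theta_{Align}$, both encoders produce the same embedding $f_{\theta}(x)$ and the same logits $z(x,\theta)$. The negative cosine similarity $-\cos(u,v)$ attains its global minimum $-1$ at $u=v$, and likewise whenever $u$ is a positive multiple of $v$. Hence, viewed as a function of the trainable expert's parameters with the other expert frozen, $L_{align\_z}$ and $L_{align\_f}$ are minimized at the diagonal. Their gradients there vanish as long as the embeddings and logits are nonzero, which is needed for the cosine to be differentiable.

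Concretely, for the Invariance expert, fix $\theta_{Align}$ and define $g(\theta):=\nabla_{\theta}L_{align\_z}(x,\theta_{Align},\theta)$.
\begin{enumerate}
\item Use the chain rule to check that $g(\theta_{Align})=0$. We have $\nabla_\theta L_{align\_z}=J_z(\theta)^\top\nabla_u[-\cos(u,z(x,\theta_{Align}))]$ evaluated at $u=z(x,\theta)$. The inner gradient is $-(v/\|v\|-\cos(u,v)\,u/\|u\|)/\|u\|$, which is zero when $u=v$.
\item Apply the twice-differentiability assumption to Taylor-expand $g$ around $\theta_{Align}$:
\begin{equation*}
g(\theta_{Inv})=H_z(\theta_{Align})(\theta_{Inv}-\theta_{Align})+o(\|\theta_{Inv}-\theta_{Align}\|),
\end{equation*}
where $H_z$ is the Hessian with respect to the trainable argument.
\item Drop the remainder under the small-drift hypothesis. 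This yields the first identity, understood to first order; the equality sign in the statement should be read this way.
\end{enumerate}
The same steps apply to $L_{align\_f}$ with the roles swapped: $\theta_{Inv}$ is frozen, $\theta_{Align}$ is trainable, and the expansion is about $\theta_{Inv}$. This produces $H_f(\theta_{Inv})(\theta_{Align}-\theta_{Inv})$.

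For the momentum interpretation, note that $\theta=\theta_{other}$ is a minimizer, so the Hessian there is positive semidefinite. A gradient step $-\eta H(\theta_{self}-\theta_{other})$ therefore satisfies
\begin{equation*}
\langle \theta_{self}-\theta_{other},\,-\eta H(\theta_{self}-\theta_{other})\rangle\le 0.
\end{equation*}
This means the update does not increase the distance between the experts to first order. It is the sense in which the auxiliary losses act as a restoring force pulling the experts together.

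The main obstacle is rigor at the expansion point. Cosine similarity is scale-invariant, so the minimizer set is not isolated, and the Hessian is singular along the directions that rescale the output. This is harmless for the first-order identity but prevents a strict contraction claim. We also need $z(x,\theta)$ and $f_\theta(x)$ to be nonzero near $\theta_{Align}$ so that the cosine is smooth there. The statement's notation $H_z(\theta)$ is loose, so I would first fix it as the Hessian in the trainable argument evaluated at the frozen expert's parameters. I would then state the result as an $o(\|\theta_{Align}-\theta_{Inv}\|)$ approximation rather than an exact equality.
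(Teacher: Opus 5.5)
Your proposal is correct and follows essentially the paper's argument: Taylor-expand about the coincident point $\theta_{Inv}=\theta_{Align}$, where the cosine similarity equals $1$ and its first derivative vanishes, so the leading term of the gradient is the Hessian in the trainable argument times the drift, with the sign flip for $L_{align\_f}$. The paper instead expands the loss to second order and differentiates the quadratic term; your direct first-order expansion of the gradient is slightly cleaner because it avoids differentiating the $\mathcal{O}(\|\Delta\|^3)$ remainder, and your positive-semidefinite-Hessian remark rigorously supports the ``pulling together'' interpretation that the paper only states informally.
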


\begin{proof}
Let $\Delta \theta = \theta_{Inv} - \theta_{Align}$ be the drift between the two experts, then $\theta_{Inv} = \theta_{Align} + \Delta \theta$, and $\theta_{Align} = \theta_{Inv} - \Delta \theta$.

Consider the auxiliary loss for the $L_{Inv}$ in Eq.~\ref{eq:l2_branch_loss}:

\begin{center}
    $L_{align\_z}(x, \theta_{Align}, \theta_{Inv}) = - \cos(z(x, \theta_{Inv}), z(x, \theta_{Align})))$,
\end{center}

where we align the logits of the Invariance expert with those of the Alignment expert. The reference logits $\theta_{Align}$ are frozen, therefore the loss is a function of $\theta_{Inv}$ only.

Perform second-order Taylor expansion around $\theta_{Align}$:
\begin{align}
    L_{align\_z} &= L_{align\_z}(x, \theta_{Align}, \theta_{Align} + \Delta \theta) \notag \\
    &= -cos(z(x, \theta_{Align}), z(x, \theta_{Align} + \Delta \theta)) \notag \\
    &= -cos(z(x, \theta_{Align}), z(x, \theta_{Align})) + \notag \\ & \nabla_{\theta}(-cos(z(x, \theta_{Align}), z(x, \theta_{Align})))\Delta \theta + \notag \\
    & \frac{1}{2}\Delta \theta^T H_{z}(\theta)\Delta \theta + \mathcal{O}(\|\boldsymbol{\Delta}\|^{3}).
\end{align}

Here, $H_z$ represents the Hessian matrix through the logits cosine similarity. Because we calculate the cosine similarity of the logits at $\theta_{Align}$ with its reference logits also at $\theta_{Align}$, the cosine similarity becomes 1, and its first-order derivative is 0, simplifying the expansion to:
\begin{align}
    L_{align\_z} &= -1 + \frac{1}{2}\Delta \theta^T H_{z}(\theta_{Align})\Delta \theta + \mathcal{O}(\|\boldsymbol{\Delta}\|^{3}).
\end{align}

For sufficiently small $\|\Delta\|$, $\mathcal{O}(\|\boldsymbol{\Delta}\|^{3})$ becomes negligible, leaving only $\frac{1}{2}\Delta \theta^T H_{z}(\theta)\Delta \theta$.

Thus, the gradient direction for the auxiliary loss is:
\begin{align}
    \nabla_{\theta}L_{align\_z}(x, \theta_{Align}, \theta_{Inv}) &\approx H_{z}(\theta_{Align})\Delta\theta \notag \\
    &= H_{z}(\theta_{Align})(\theta_{Inv} - \theta_{Align}).
\end{align}

Similarly, substituting $\theta_{Align} = \theta_{Inv} - \Delta\theta$ and performing a similar expansion on the auxiliary loss for the Alignment expert gives (flipped sign due to subtraction of $\Delta\theta$):
\begin{align}
    \nabla_{\theta}L_{align\_f}(x, \theta_{Align}, \theta_{Inv}) &\approx -H_{f}(\theta_{Inv})\Delta\theta \notag \\
    &= -H_{f}(\theta_{Inv})(\theta_{Inv} - \theta_{Align}) \notag \\
    &= H_{f}(\theta_{Inv})(\theta_{Align} - \theta_{Inv}).
\end{align}

Similar to the logits alignment loss, $H_{f}(\theta_{Inv})$ is the Hessian matrix through the feature cosine similarity.
\end{proof}

\begin{tcolorbox}[width=1.0\linewidth, colframe=blackish, colback=beaublue, boxsep=0mm, arc=2mm, left=2mm, right=2mm, top=5mm, bottom=2mm]
\vspace{-0.3cm}
In one step, the update to Invariance expert would be equal to $-\eta \nabla_{\theta}L_{align_z} = -\eta H_{z}(\theta_{Align})(\theta_{Inv} - \theta_{Align})$, penalizing Invariance expert if it is far way from Alignment expert. Vice versa, the Alignment expert is pulled toward the last known location of Invariance expert. Therefore, in each update step, the experts are prevented from moving too far from each other.
\end{tcolorbox}

Theorem~\ref{thm:branch_momentum} affects our design choices for CARE substantially. Namely, both cosine similarity and clean representation alignment (we align the clean logits and clean embeddings) are chosen because in this way we can omit the first-order derivative in the Taylor expansion, making theoretical analysis easier. Intuitively, the auxiliary losses force the experts to maintain the same clean representation while exploration comes from different adversarial specialization. The drift  $\Delta \theta$ is small at the beginning of training, due to the experts being initialized from the same pretrained network, and the momentum from the auxiliary losses keep them from diverging.

\begin{algorithm}[!t]
\caption{\textbf{C}ollaborative \textbf{A}dversarial \textbf{R}obustness fine-tuning using \textbf{E}mbedding alignment (CARE)}
\label{alg:CARE}
{\small
\textbf{Input}:
\begin{itemize}
    \item Parameter sets: pre-trained CLIP $\theta_{CLIP}$; Alignment expert $\theta_{Align}$; Invariance expert $\theta_{Inv}$; unified model $\theta_{CARE}$.
    \item $T$ total fine-tuning steps, $M$ collaborative perturbation initialization steps, $K$ adversarial generation steps, and adversarial generation step-size $\alpha$. Perturbation radius $\epsilon$.
    \item Dataset $D$, learning rate $\eta$.
    \item EMA beta $\beta$ and gamma $\gamma$.
\end{itemize}
}
{\small
\begin{algorithmic}[1]
    \STATE Initialize $\theta_{Align}$, $\theta_{Inv}$, $\theta_{CARE}$ $\leftarrow$ $\theta_{CLIP}$
    \FOR {$t=1, \ldots, T$}
        \STATE Sample $(x,y)$ from dataset $D$
        \STATE $\tilde{x}^{0} \leftarrow x + \delta, \delta \sim \mathcal{N}(-\epsilon, \epsilon)$ 

        \FOR {$m = 1,\ldots,M$} 
            \STATE \colorbox{grayish}{ $\tilde{x}^{m} = \arg\max_{|\tilde{x}-x| \leq \epsilon}L_{GAIP}(\tilde{x}^{m-1}, x)$.}
        \ENDFOR

        \STATE$\tilde{x}_{Align}^{0}, \tilde{x}_{Inv}^{0} \leftarrow \tilde{x}^{M}$

        \FOR{$k = 1,\ldots,K$}
            \STATE \colorbox{grayish}{$\tilde{x}_{Align}^{k} = arg\max_{|\tilde{x}-x| \leq \epsilon}L_{CE}(\tilde{x}_{Align}^{k-1}, y)$}
            \STATE \colorbox{grayish}{$\tilde{x}_{Inv}^{k} = arg\max_{|\tilde{x}-x| \leq \epsilon}L_{L2}(\tilde{x}_{Inv}^{k-1}, x)$}
        \ENDFOR

        \STATE \fcolorbox{white}{grayish}{Parameter update for each expert:}
            \STATE \qquad $\theta_{Align} \leftarrow \theta_{Align} - \eta \nabla_{\theta_{Align}}L_{Align}$ \COMMENT{Eq.~\eqref{eq:ce_branch_loss}}

            \STATE \qquad $\theta_{Inv} \leftarrow \theta_{Inv} - \eta \nabla_{\theta_{Inv}}L_{Inv}$\COMMENT{Eq.~\eqref{eq:l2_branch_loss}}

        \STATE \colorbox{grayish}{\minibox{Co-distillation using EMA: \\
        $\theta_{CARE} := \beta \theta_{CARE} + (1-\beta)[\gamma\theta_{Align} + (1-\gamma)\theta_{Inv}] $}}
    \ENDFOR
    \STATE {\bfseries return} Fine-tuned model parameter $\theta_{CARE}$
\end{algorithmic}
}
\end{algorithm}

\subsection{Universal perturbation initialization}
\label{sec:universal_perturbation}
To further improve the coherence between the two experts, we take inspiration from \cite{dong2024adversarially}, \cite{moosavi2017universal}, \cite{shafahi2020universal}, and design our own \textit{\textbf{G}lobal \textbf{A}dversarial perturbation \textbf{I}nitialization \textbf{P}rocess} (GAIP). Instead of sampling independent random noise for each expert, GAIP constructs a shared universal perturbation that benefits both adversarial specializations.

We use $\theta_{Align}$ and $\theta_{Inv}$ to generate this initial perturbation. During this process, they are frozen. We align the perturbed logits with those of $\theta_{Align}$ and its image embeddings with those of $\theta_{Inv}$ using negative cosine similarity. The loss for generating the universal perturbation is:
\begin{equation}
    h(\tilde{x}) = 1 - \cos\big(z(\tilde{x}, \theta_{Align}), z(x, \theta_{Align})\big),
    \label{eq:gaip_ce}
\end{equation}
\begin{equation}
    g(\tilde{x}) = 1 - \cos\big(f_{\theta_{Inv}}(\tilde{x}), f_{\theta_{Inv}}(x)\big),
    \label{eq:gaip_l2}
\end{equation}
\begin{equation}
    \begin{split}
        L_{GAIP}(\tilde{x},x) = h(\tilde{x}) + g(\tilde{x}).
    \end{split}
    \label{eq:loss_gaip}
\end{equation}
Where $h(\tilde{x})$ and $g(\tilde{x})$ are losses for alignment of logits and image embeddings, respectively.

Intuitively, GAIP follows the principle of Expert harmonization, where experts follow the stronger embeddings of each other. We find that instead of applying the expert losses (CE and L2) directly, using Cosine similarity produces a better result and is easier to control, as CE and L2 losses operate on different scales. More details can be found in Section~\ref{sec:gaip_ablation}.

The GAIP perturbation is generated through an M-step PGD process using the GAIP loss described above. Then, each expert will use that initialized perturbation to generate its own adversarial sample through a K-step PGD: the Alignment expert with CE loss and the Invariance expert with L2 loss.

\subsection{Pseudocode}
\label{appendix:pseudocode}

In this section, we present the algorithmic specification of CARE. The algorithm consists of 4 main steps, as shown in Figure~\ref{fig:care_method}: first, the universal perturbation is initialized; second, each expert uses the perturbation to generate an adversarial sample befitting its optimization process (image-text feature alignment of close clean-adversarial image feature); third, the experts are optimized using a joint loss of usual robustness optimization and Expert harmonization loss; finally, they combine their parameters using EMA to produce the unified model. The algorithm is shown in Algorithm~\ref{alg:CARE}.

\input{tables/zeroshot}

\section{Results}
\label{sec:results}
We find that image-text experts like TeCoA thrive under learned vocabulary where the image-text alignment is best maintained, whereas image-invariance experts like FARE perform better when facing new classes not seen during training. In this section, we compare our method CARE against its bases: TeCoA and FARE, highlighting its effectiveness in generalizing between different robust model experts. We also conduct investigations of the model's performance on downstream vision-language tasks and the effect of the Expert harmonization method.

\noindent\textbf{Settings.} We evaluate zero-shot evaluation using the test set of ImageNet-1K \cite{deng2009imagenet} as the in-distribution (seen dataset) benchmark. 13 additional zero-shot datasets are used to measure zero-shot accuracy on unseen data. We also test our model on two vision-language tasks: image captioning and visual question answering. Datasets and evaluation protocols for each tasks are provided in their specific sections below.

\noindent \textbf{Implementation details.} Unless otherwise stated, all results are from CLIP ViT-L/14 variants, following \cite{mao2022understanding} \cite{schlarmann2024robust}, training involves 10-step PGD ($\ell_\infty$, $\epsilon \in \{\tfrac{2}{255}, \tfrac{4}{255}\}$, $\alpha = \tfrac{1}{255}$). Follows \cite{schlarmann2024robust}, we use superscript to denote the $\epsilon$ value used for training, CARE\textsuperscript{2} would be trained using $\epsilon=\frac{2}{255}$ and CARE\textsuperscript{4} is trained on $\epsilon = \frac{4}{255}$. For fairness, as our GAIP process also employs PGD, we use 5-step PGD for universal perturbation generation and 5 steps for adversarial generation. TeCoA and FARE use all 10 steps for adversarial generation. For EMA merging, decay rate $\beta=0.999$ and expert weight $\gamma=0.5$. We use AutoAttack \cite{croce2020reliable} for zero-shot robustness evaluation and adaptive attacks for downstream vision-language tasks. For fine-tuning details, we use AdamW \cite{loshchilovdecoupled} as the optimizer (beta (0.9, 0.95)). The learning rate is set to $5\!\times\!10^{-5}$ and weight decay is set to $1\!\times\!10^{-4}$. The batch size is 128, and the fine-tuning is done on 4 NVIDIA H100 80GB GPUs.

\input{tables/image_captioning}
\input{tables/vqa}

\subsection{Image classification}

\noindent \textbf{Datasets.} Following previous works on VLMs' robustness \cite{mao2022understanding} \cite{schlarmann2024robust}, we train our models on the training split of ImageNet-1K \cite{deng2009imagenet}. Zero-shot classification is reported for the validation split of ImageNet-1K (test labels are unavailable), and this is used to test the in-distribution performance. In addition, performance on unseen data is evaluated using the 13 datasets covering a wide range of scenarios:

\begin{itemize}
    \item \textbf{Natural objects:} STL-10 \cite{coates2011analysis}, CIFAR-10/100 \cite{krizhevsky2009learning}, Caltech-101 \cite{fei2004learning}.
    \item \textbf{Fine-grained:} Stanford Cars \cite{krause20133d}, Oxford-IIIT Pets \cite{parkhi2012cats}, Flowers-102 \cite{nilsback2008automated}, FGVC-Aircraft \cite{maji2013fine}.
    \item \textbf{Textures:} DTD \cite{cimpoi2014describing}.
    \item \textbf{Remote sensing:} EuroSAT \cite{helber2019eurosat}.
    \item \textbf{Medical:} PCAM \cite{veeling2018rotation}.
    \item \textbf{Robustness variants of ImageNet:} ImageNet-R \cite{hendrycks2021many} and ImageNet-S \cite{wang2019learning}.
\end{itemize}

\noindent\textbf{Evaluation protocol.} We report clean accuracy and robust accuracy under AutoAttack \cite{croce2020reliable}. We assess accuracy under $\epsilon = \tfrac{2}{255}$ and $\epsilon = \tfrac{4}{255}$. We evaluate robustness on 1000 images of each dataset, and clean accuracy for all samples. In order to demonstrate our motivation, the different specialization of adversarial robust model experts and how CARE generalize across them, results of ImageNet-1K and the other 13 datasets are reported separately.

Details on image classification are recorded in Table~\ref{tab:zero_shot_results}. CARE shows a noticeable increase in accuracy across almost all settings, both on ImageNet and on other datasets. Under the strongest attack setting, with $\epsilon=\tfrac{4}{255}$, CARE outperforms TeCoA by 2\% and FARE by nearly 8\%. We reiterate our previous point from Section~\ref{sec:introduction}, that Image-text alignment experts are stronger on learned class names and Image-invariance experts can better transfer robustness to unseen datasets. Considering the preferred evaluation dataset for each expert, ImageNet for TeCoA and Zero-shot datasets for FARE, CARE still outperforms both experts with a clear improvement.

For models trained on $\epsilon=\tfrac{2}{255}$, that is TeCoA$^2$, FARE$^2$ and CARE$^2$, we observe another phenomenon. As we increase attack strength, FARE accuracy drops rapidly, from 70.6\% on clean to 44.6\% for $\epsilon=\tfrac{2}{255}$, and falls to 18.6\% on $\epsilon=\tfrac{4}{255}$ - the attack that models are not trained for. However, both TeCoA and CARE shows less sharp decline. This can suggest that Image-text alignment experts, such as TeCoA, can be more consistent and robust on image classification. It also shows that CARE is able to inherit this strong trait.

\subsection{Downstream vision-language tasks}
\noindent \textbf{Datasets.} For downstream vision-language tasks, we evaluate image captioning and visual question answering. Image captioning uses COCO \cite{lin2014microsoft} and Flickr30k \cite{plummer2015flickr30k}. Visual question answering uses TextVQA \cite{singh2019towards} and VQAv2 \cite{goyal2017making}.

\noindent\textbf{Evaluation protocol.} Evaluation on vision-language tasks is done by replacing the ViT-L/14 image encoder in LLaVA 1.5-7B \cite{liu2023visual} with CARE. All other components remain fixed. We experiment on image captioning and visual question answering. In both cases, 500 images are randomly sampled for adversarial evaluation, and all images are used for clean evaluation. The details can be found below.

\noindent\textbf{Image captioning.} CIDEr score \cite{vedantam2015cider} reported as the evaluation metric. Adversarial samples are generated using APGD \cite{croce2020reliable} at 100 iterations. After each attack, an early-stop mechanism is used to remove all samples with less than 10 CIDEr score for COCO or 2 CIDEr score for Flickr30k. The $\epsilon$ used for APGD is set to be the same $\epsilon$ the image encoders are trained on, $\tfrac{2}{255}$ and $\tfrac{4}{255}$. Results are shown in Table~\ref{tab:image_captioning_results}.

\noindent\textbf{Visual question answering.} We assess VQA accuracy \cite{antol2015vqa} for the TextVQA \cite{singh2019towards} and VQAv2 \cite{goyal2017making} datasets, selecting the five most frequent answers among the ten ground-truth annotations for each sample. Similarly to image captioning, we employ APGD \cite{croce2020reliable} as the attacking method, with $\epsilon \in \{\tfrac{2}{255}, \tfrac{4}{255}\}$, in this case with targeted perturbations using target strings such as \texttt{``Maybe''} or \texttt{``Word''}, following \cite{schlarmann2024robust}. Results are shown in Table~\ref{tab:vqa_results}.

For downstream tasks, FARE has a slight edge on clean evaluation and ties with CARE on Image Captioning at $\epsilon=\tfrac{4}{255}$, however, CARE is the best in all other settings. We note that the images and prompts for downstream tasks can be considered zero-shot data, which the models are not explicitly trained on. This aligns with our observation that Image-invariance experts, like FARE, thrive in this setting, which can explain FARE's strong performance compared to its image classification results. However, CARE is still the best for robust accuracy. This suggests that CARE is able to capitalize on FARE's strong specialization, while the image-text alignment knowledge is transferable across text encoders, since we use different text encoders for downstream tasks.

\input{tables/harmonization_comparison}

\subsection{Analysis of harmonization method}
\label{sub:harmonization_analysis}

Table~\ref{tab:harmonization_comparison} shows the zero-shot image classification results of the three models - TeCoA\textsuperscript{4}, FARE\textsuperscript{4}, and CARE\textsuperscript{4}, along with the checkpoints of the two experts at the end of training. The results of zero-shot image classifications are surprising: there is an overall increase in Invariance expert on both in-distribution and zero-shot accuracy compared to FARE. Meanwhile, the Alignment expert, against TeCoA, gets a boost in in-distribution accuracy but also reduced performance on the zero-shot datasets.

We hypothesize this is due to the stronger image-text feature alignment on the Invariance expert, allowing it to perform better overall. Furthermore, since the objective of the Invariance expert (as well as FARE) is to minimize differences between clean and adversarial images, it has not been fine-tuned on the image classification task. The information exchange with the Alignment expert helps the model maintain task knowledge, which is ignored entirely during training FARE.

For the Alignment expert, however, by aligning its image embedding with that of the Invariance expert, the model gets a more robust image embedding and is able to generate even stronger image-text logits for training, culminating in overfitting on the in-distribution dataset even more. We believe the class logits is dependent on the image embedding, so for the Alignment expert, modification on the image embedding affects the logits, and by extension, the adversarial training process, directly. The reverse is not true, and the knowledge the Invariance expert gained on the logits does not explicitly affect the loss function on the image embedding.

\begin{figure}[t]                    
  \centering
  \includegraphics[width=\linewidth]{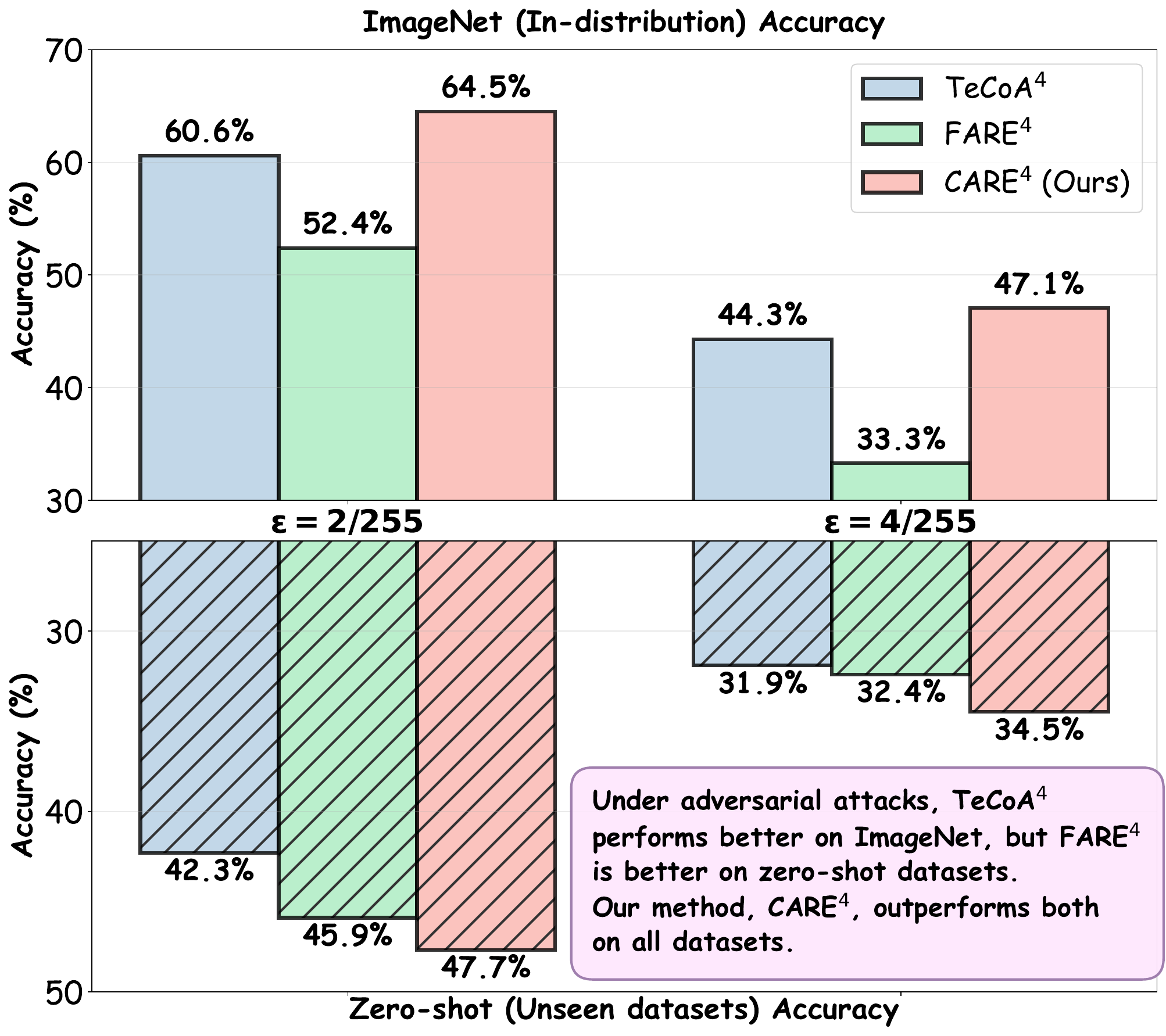}
  \caption{A comparison of robust accuracy on image classification for different fine-tuning methods under adversarial attack $\epsilon=\{2,4\}/255$. Each experts, TeCoA and FARE, specialize in one evaluation setting: TeCoA excels on Imagenet evaluation while FARE performs better outside of Imagenet. Our method, CARE, displays the best result in both cases, showcasing its ability to generalize between robust model experts.}
  \label{fig:robustness_tradeoff}
\end{figure}

\section{Computational resources analysis}
\label{sec:time_analysis}
One of the main bottlenecks in training multiple models like CARE is the added resources. In theory, the added cost of CARE can be calculated as follows:

\begin{itemize}
    \item Double training time due to training two experts at once.
    \item Triple memory usage due to maintaining 2 experts and the merged model at the same time.
\end{itemize}

However, in this section we show that with proper engineering techniques and analysis, this added cost can be alleviated partially for efficient training. The full comparison across methods is shown in Table~\ref{tab:training_efficiency}.

\subsection{Time analysis}
Our CARE uses 5 PGD steps for GAIP and 5 PGD steps training each experts. From the perspective of each experts, they still complete a total of 10 PGD steps. However, since the GAIP steps are shared, the total number of PGD steps to perform are only 15. Thus, our training time only increases by a factor of 1.5 instead of 2.

\input{tables/time_comparison}
\input{tables/ablation_harmonization}

\subsection{Memory consumption}
Memory usage comes from both data and models. We notice that each expert creates its own adversarial data for each step, which the other expert does not use. Thus, we delete adversarial data generated after each training step. For example, when training the Alignment expert we delete the Invariance expert training data, and vice-versa. This method substantially reduces memory consumption from data usage to the same as training a single model.

We also notice that the final merged model CARE is created using EMA, without any gradient update. Thus it can be kept in evaluation mode to further reduce memory. Another extreme optimization would be to offload it into the CPU completely, however during training we find that putting the model into evaluation mode is enough for our setup.

Therefore the memory consumption of CARE is only equal to training both TeCoA and FARE on the same machine despite actually optimizing 3 models. Furthermore, during inference we only use the final CARE model, so there is no added memory usage. 

\section{Ablation study}

In this section we go over various ablation choices when empirically testing our method. Due to time and resource constraints, these ablations were done on ViT-B/32 instead of ViT-L/14. After ablation, the best design (loss function, combination mechanism) is chosen to be integrated, leading to our results shown in Section~\ref{sec:results}.

\subsection{Ablation of Expert harmonization loss}
\label{appendix:ablation_loss}
We investigate our choice of loss function for Expert harmonization: Cosine similarity against KL Divergence. The models are trained with $\epsilon=\tfrac{4}{255}$. All other training configurations are kept the same. The results are shown in Table~\ref{tab:ablation_harmonization}, indicating zero-shot image classification accuracy. There are minor differences between the two models, which we regard as insignificant. We decide in the end to use Cosine similarity loss for our ViT-L/14 training, due to two reasons:

\begin{itemize}
    \item Theoretical analysis in Theorem~\ref{thm:branch_momentum} shows that Cosine similarity acts as momentum to prevent divergence.
    \item Cosine similarity is more numerically stable, as the loss is bounded in the range (-1, 1).
\end{itemize}

\input{tables/ablation_gaip}
\input{tables/framework_comparison}

\subsection{Ablation on GAIP}
\label{sec:gaip_ablation}
We test our GAIP design in 3 different settings: 
\begin{itemize}
    \item Our GAIP using Cosine similarity loss.
    \item No GAIP at all (in this case we allocate 5 PGD steps for GAIP into the training process, thus training for 10 steps of PGD).
    \item GAIP using the expert losses directly for PGD (Alignment expert uses cross-entropy loss and Invariance expert uses L2 loss).
\end{itemize}

The results are presented in Table~\ref{tab:gaip_ablation}. Cosine similarity achieves the best result over all. Using the expert losses directly provides marginal gain on clean accuracy (~0.5\%) but substantial drop in robust accuracy. Meanwhile, without GAIP the clean accuracy suffers. Thus, we decided to use Cosine similarity in our final version.

\begin{figure*}[t]
  \centering
  {%
    \includegraphics[width=0.3\linewidth]{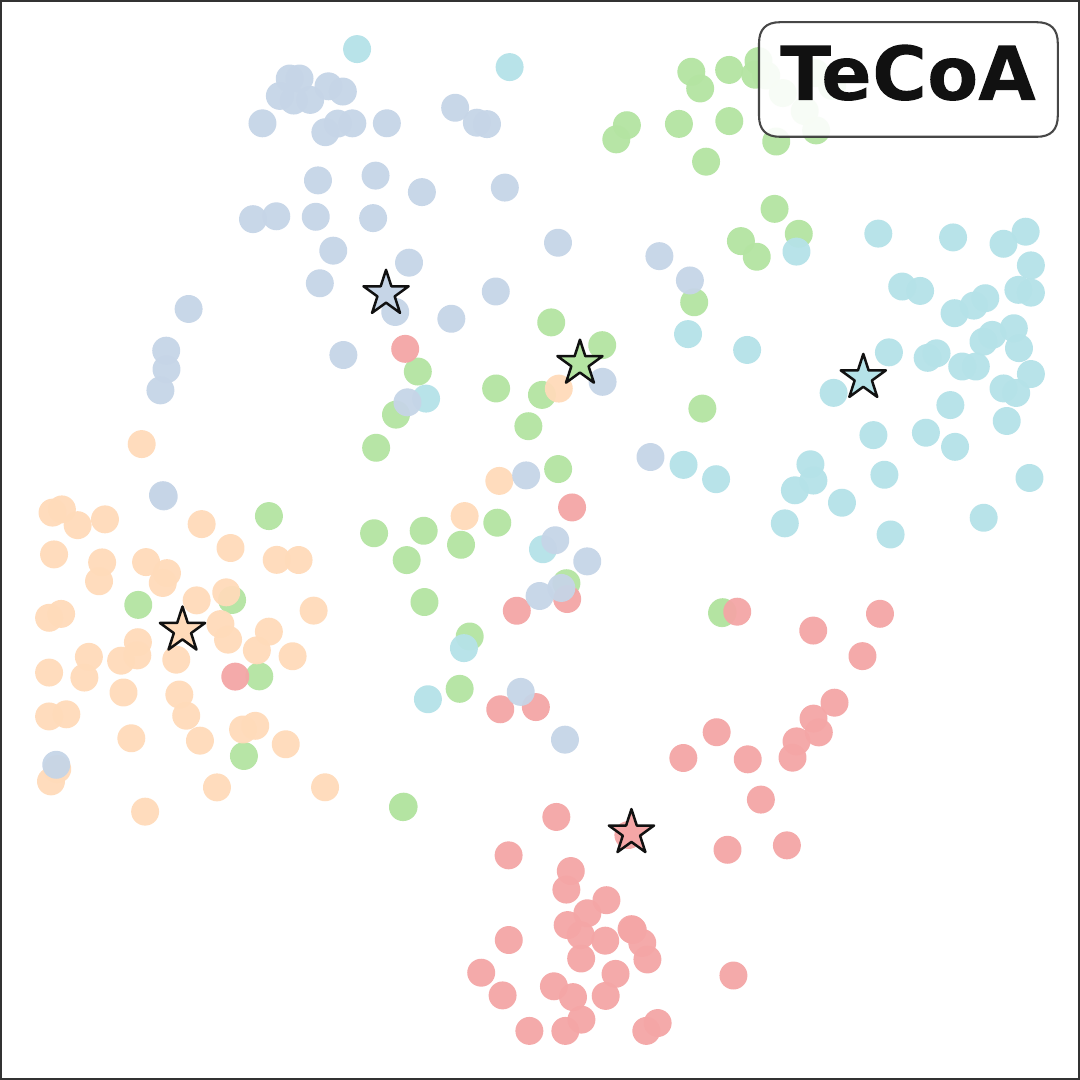}
    \label{fig:tsne_imagenet_tecoa}
  }
  \hfil
  {%
    \includegraphics[width=0.3\linewidth]{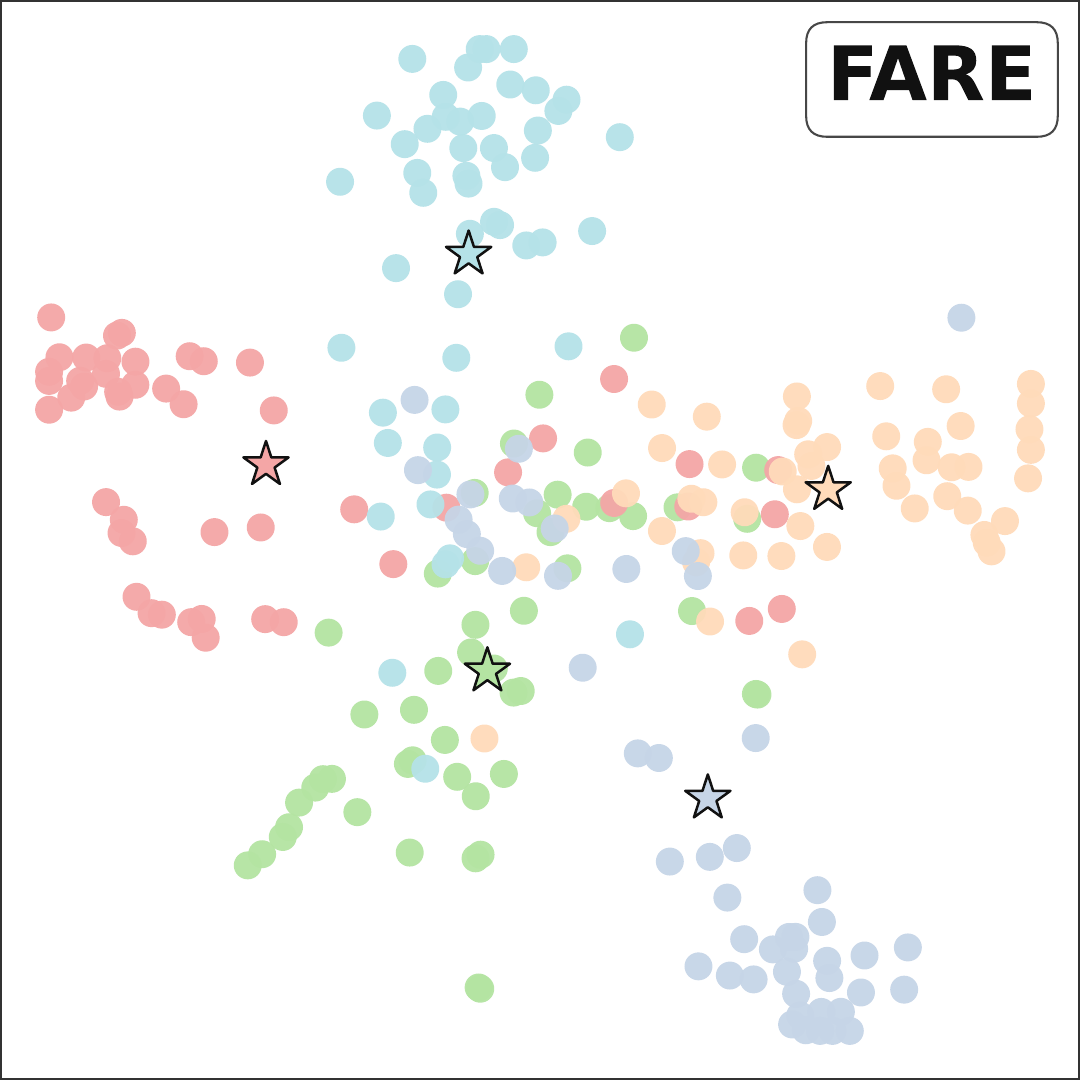}
    \label{fig:tsne_imagenet_fare}
  }
  \hfil
  {%
    \includegraphics[width=0.3\linewidth]{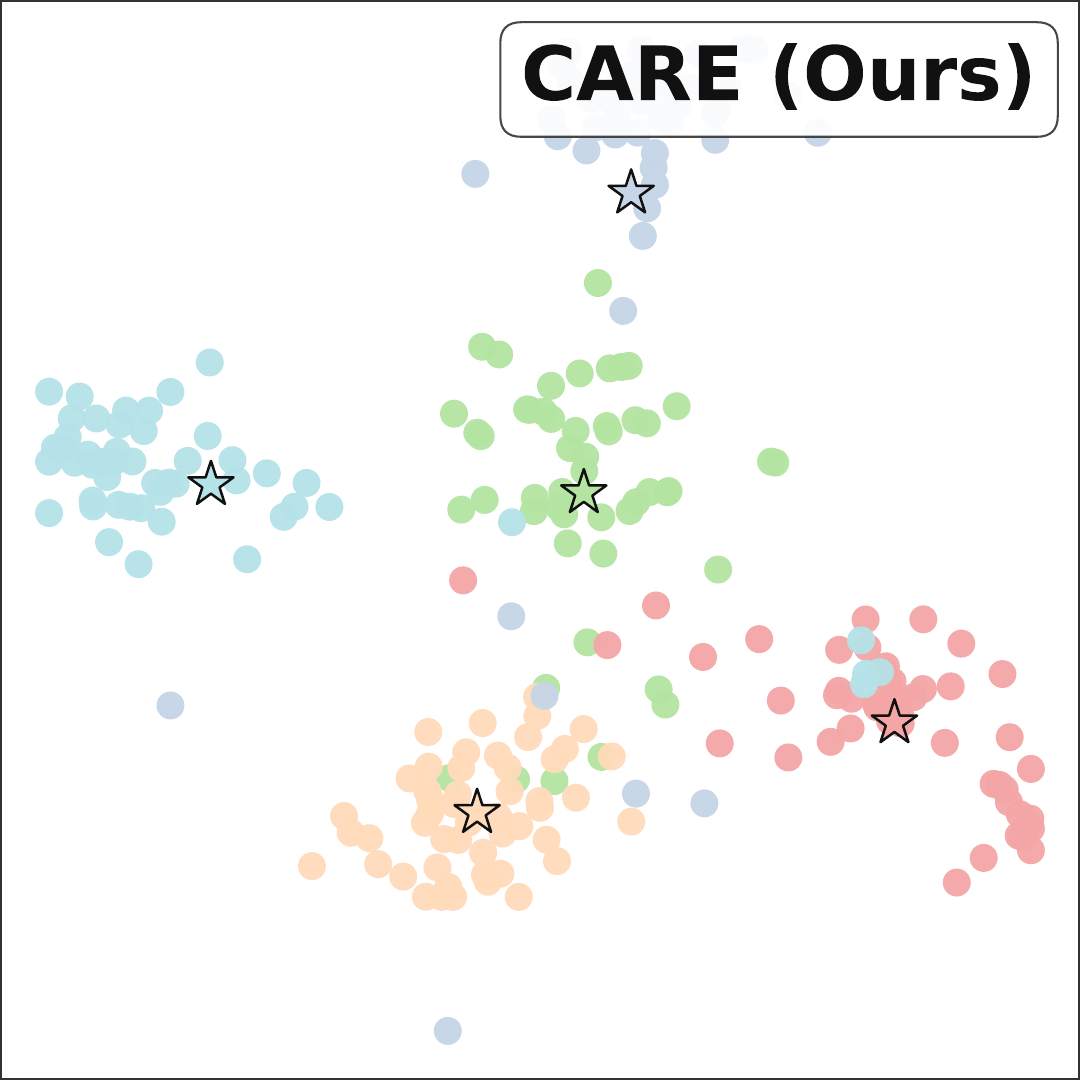}
    \label{fig:tsne_imagenet_care}
  }
  \caption{t-SNE visualization of features from 5 classes in ImageNet.}
  \label{fig:tsne_imagenet}
\end{figure*}

\input\begin{figure}[!t]                    
  \centering
  \includegraphics[width=\linewidth]{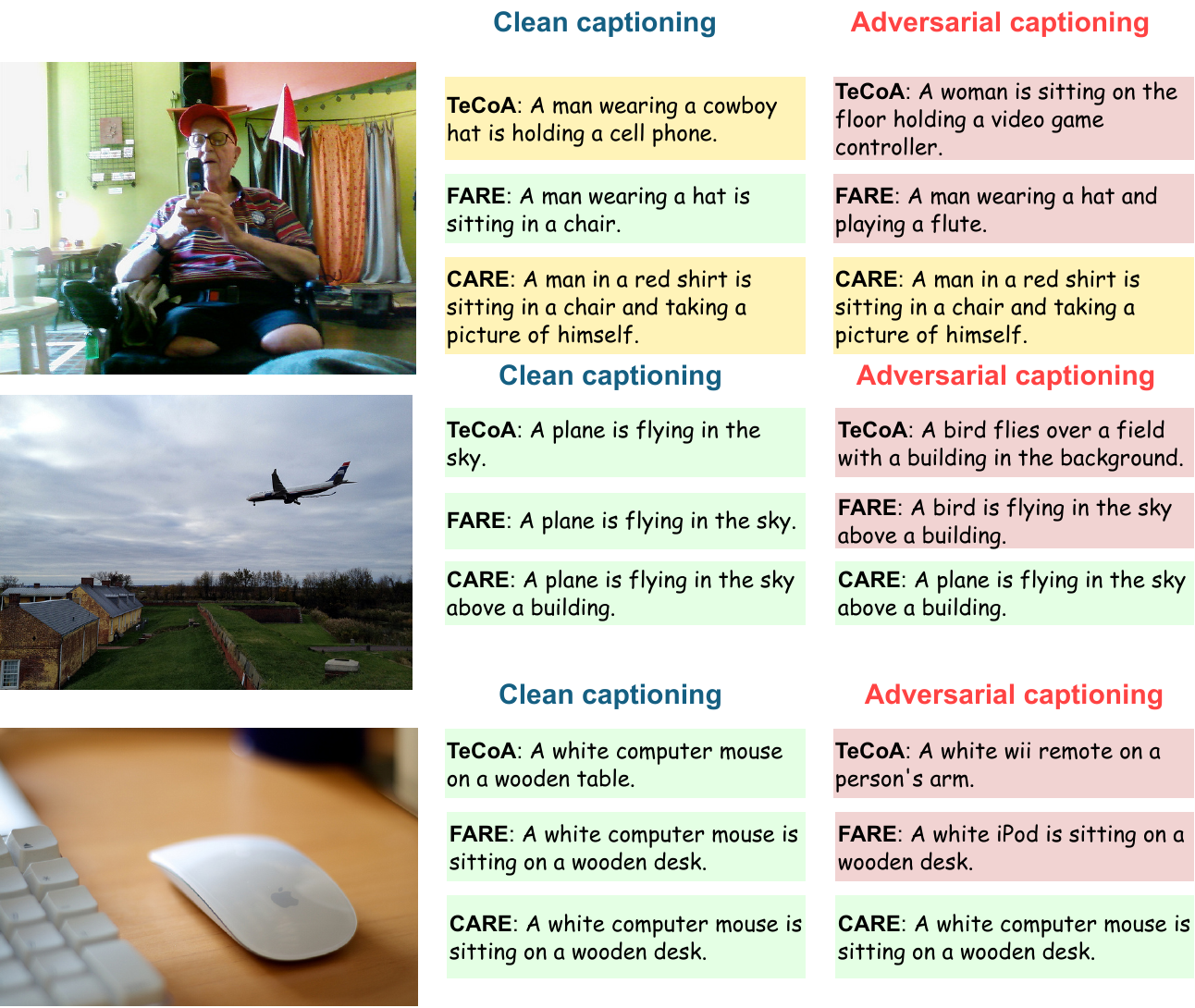}
  \caption{Qualitative results for Image captioning using LLaVa. We show the captions generated by each CLIP-based model for the original image and under adversarial attack ($\epsilon = \tfrac{4}{255}$).}
  \label{fig:image_captioning}
\end{figure}

\input\begin{figure}[!t]                    
  \centering
  \includegraphics[width=\linewidth]{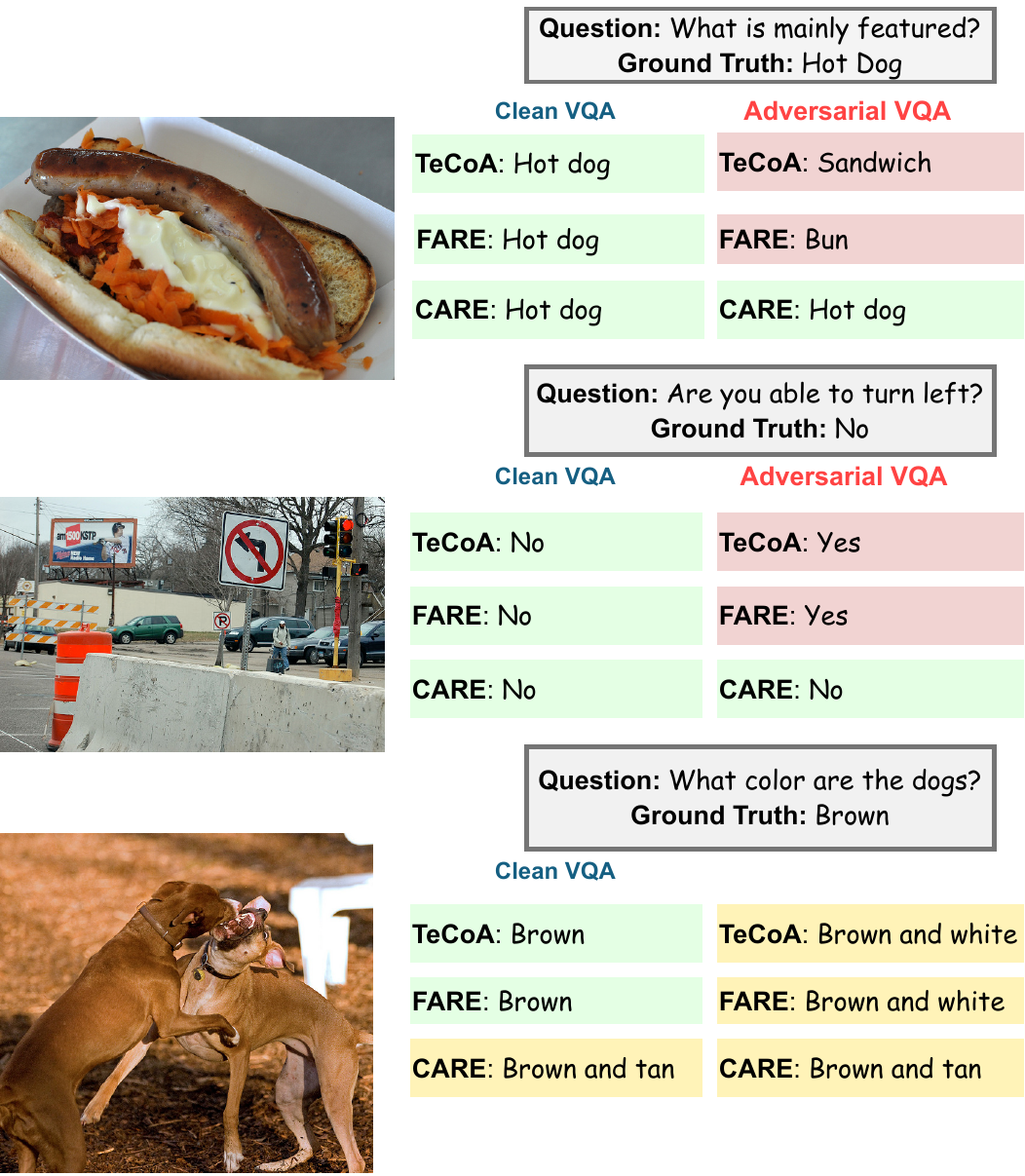}
  \caption{Qualitative results for Visual question answering using LLaVa. The adversarial strength used is $\epsilon = \tfrac{4}{255}$. Here the ground truth is denoted by the most frequent answers among annotations for each sample.}
  \label{fig:question_answering}
\end{figure}

\subsection{Comparison to other collaborative frameworks}
\label{appendix:collaborative_comparison}
A direct inspiration for this work comes from \cite{wang2023generalist}. In this paper, the authors trained two experts, one handling clean accuracy and the other for adversarial accuracy. However, there is no communication between the experts during training, and the authors noted that to prevent divergence, it is necessary to perform Parameter redistribution periodically. Specifically, this involves overwriting the experts' parameters with those of the unified model. We believe this will disable the experts' ability for exploration, since they can only optimize in the immediate parameter space near the unified model before being overwritten. We test our Expert harmonization method against this approach to show that our framework can prevent divergence while maintaining robustness. In order to do so, we train a ViT-B/32 image encoder with the same setup at $\epsilon=\tfrac{4}{255}$, replacing Expert harmonization with Parameter redistribution. The redistribution occurs once every 1000 steps, starting from the 8000\textsuperscript{th} step (out of 20000 total steps). The results can be found in Table~\ref{tab:framework_comparison}. It shows that the Expert harmonization model outperforms its redistribution counterpart completely, up to nearly 2\% more robust accuracy on $\epsilon = \tfrac{4}{255}$.

\section{Visualization}
\label{sec:visualization}

Figure~\ref{fig:tsne_imagenet} presents the t-SNE visualization of feature representations from 5 randomly sampled classes from ImageNet test set (50 samples each). We find that TeCoA produces clusters with high spread, making it hard to find the boundary between classes. On the other hand, the classes in FARE are fragmented, with each cluster composed of many mini-clusters, perhaps due to the lack of text supervision to bind images in the same class. CARE shows the most promising results, with compact clusters and well-defined borders.

We present Figure~\ref{fig:image_captioning} and Figure~\ref{fig:question_answering} to show qualitative comparisons on Image captioning and Visual questioning tasks. Here, green denotes captionings/answers that fit the ground truth completely, yellow denotes partially correct answers, while red ones are wrong outputs. CARE is strong against adversarial attacks on visual-image tasks, maintaining consistent answers and captions under perturbations.

\section{Limitations and dicussion}
\label{sec:limitations}

This work stems from the observation that adversarial training for VLMs can be primariliy separated into two different categories: image-text alignment or image-invariance. We find each of these specializations has its own preferred evaluation settings where it performs best. CARE is thus created to generalize across these robust model experts. In this sense the construction of CARE is clear and simple: the image-text alignment experts train the logits between image and text, while the image-invariance experts train the image features. Expert harmonization between them is conducted to let each expert learns from the strong embedding of the other. However, other kind of categorization might exist, in which case the Expert harmonization process must be adapted accordingly.

Our work also reveals that a training scheme that optimizes both the image features and image-text alignment does not necessitate a better model, as evidenced by the Alignment expert overfitting onto the fine-tuned dataset in Section~\ref{sub:harmonization_analysis}. With regard to the task of image classification, there is an asymmetry between image embeddings and class logits, one is dependent on the other but not vice versa. 

For future work, we would like to design collaborative methods with respect to this asymmetry. Furthermore, while we have done our best to limit the computational resources of CARE, as stated in Section~\ref{sec:time_analysis}, we believe it can still be a bottleneck for larger models, which we want to resolve.

\section{Conclusion}
In this paper, we presented CARE, a novel framework for adversarial training that can generalize across robust model experts. We test CARE using two state-of-the-art methods: TeCoA and FARE, each of them representing a different approach to adversarial fine-tuning.  Our empirical experiments show that CARE outperforms the methods it is based on. CARE is able to combine the complementary strengths of TeCoA and FARE and gaining the specialization of both.


\bibliographystyle{IEEEtran}
\bibliography{main}


 




\vfill

\end{document}

%% file: tables/zeroshot.tex
\setlength{\tabcolsep}{3.2pt}
\begin{table*}[htb]
    \centering
    \caption{Image classification accuracy (\%). Results are shown for three evaluation settings: clean images, $\ell_{\infty}$ adversarial examples with perturbation size $\varepsilon=2/255$, and $\varepsilon=4/255$. We report in-distribution (ID) performance on ImageNet, zero-shot (ZS) average, and average of ID and ZS. Best results are shown in \textbf{bold} and \underline{underline} denotes second-best accuracies.}
    \label{tab:zero_shot_results}
    \fontsize{8.5}{10}\selectfont

    \begin{tabular}{ll|c|ccccccccccccc|c|c}
        \toprule
        \multicolumn{2}{c|}{}
        & \multicolumn{1}{c|}{\textbf{In-distribution}}
        & \multicolumn{14}{c|}{\textbf{Zero-shot (Unseen datasets)}}
        & \multicolumn{1}{c}{\textbf{Overall average}} \\
        \cmidrule(lr){3-3}
        \cmidrule(lr){4-17}
        
        \multicolumn{2}{l|}{\textbf{Method}}
        & ImageNet
        & \begin{tabular}[t]{@{}c@{}}\rotatebox{90}{CalTech}\end{tabular}
        & \begin{tabular}[t]{@{}c@{}}\rotatebox{90}{Cars}\end{tabular}
        & \begin{tabular}[t]{@{}c@{}}\rotatebox{90}{CIFAR10}\end{tabular}
        & \begin{tabular}[t]{@{}c@{}}\rotatebox{90}{CIFAR100}\end{tabular}
        & \begin{tabular}[t]{@{}c@{}}\rotatebox{90}{DTD}\end{tabular}
        & \begin{tabular}[t]{@{}c@{}}\rotatebox{90}{EuroSAT}\end{tabular}
        & \begin{tabular}[t]{@{}c@{}}\rotatebox{90}{FGVC}\end{tabular}
        & \begin{tabular}[t]{@{}c@{}}\rotatebox{90}{Flowers}\end{tabular}
        & \begin{tabular}[t]{@{}c@{}}\rotatebox{90}{ImageNet-R}\end{tabular}
        & \begin{tabular}[t]{@{}c@{}}\rotatebox{90}{ImageNet-S}\end{tabular}
        & \begin{tabular}[t]{@{}c@{}}\rotatebox{90}{PCAM}\end{tabular}
        & \begin{tabular}[t]{@{}c@{}}\rotatebox{90}{OxfordPets}\end{tabular}
        & \begin{tabular}[t]{@{}c@{}}\rotatebox{90}{STL-10}\end{tabular}
        & ZS Avg
        &  \\
        \midrule
        
        \multicolumn{18}{l}{\textbf{clean}} \\
        \cmidrule(lr){1-18}
        \rowcolor{gray!15}
        \multicolumn{2}{l|}{CLIP} 
        & 74.9 
        & 83.3 & 77.9 & 95.2 & 71.1 & 55.2 & 62.6 & 31.8 & 79.2 & 87.9 & 59.6 & 52.0 & 93.2 & 99.3 
        & 73.1 
        & 74.0 \\

        \multicolumn{2}{l|}{TeCoA$^2$-CLIP} 
        & \underline{80.2} 
        & 80.7 & 50.1 & 87.5 & 60.7 & 44.4 & \textbf{26.1} & 14.0 & 51.8 & 80.1 & 58.4 & 49.9 & 80.0 & 96.1 
        & 60.0 
        & 70.1 \\

        \multicolumn{2}{l|}{FARE$^2$-CLIP} 
        & 74.2 
        & \textbf{84.8} & \textbf{70.5} & \underline{89.5} & \underline{69.1} & \textbf{50.0} & 25.4 & \textbf{26.7} & \textbf{70.6} & \textbf{85.5} & \underline{59.7} & \underline{50.0} & \textbf{91.1} & \textbf{98.5} 
        & \textbf{67.0} 
        & \underline{70.6} \\

        \rowcolor{green!15}
        \multicolumn{2}{l|}{\textbf{CARE$^2$-CLIP (Ours)}} 
        & \textbf{80.7} 
        & \underline{83.9} & \underline{54.3} & \textbf{93.3} & \textbf{71.6} & \underline{47.6} & \underline{25.6} & \underline{16.1} & \underline{53.9} & \underline{83.8} & \textbf{61.3} & \textbf{50.6} & \underline{86.1} & \underline{98.4} 
        & \underline{63.6}  
        & \textbf{72.2} \\
        
        \midrule

        \multicolumn{2}{l|}{TeCoA$^4$-CLIP} 
        & \underline{75.2} 
        & 78.4 & 37.9 & \underline{79.6} & 50.3 & 38.0 & \underline{22.5} & 11.8 & 38.4 & 74.3 & 54.2 & \underline{50.0} & 76.1 & 93.4 
        & 54.2 
        & 64.7 \\

        \multicolumn{2}{l|}{FARE$^4$-CLIP} 
        & 70.4 
        & \textbf{84.7} & \textbf{63.8} & 77.7 & \underline{56.5} & \underline{43.8} & 18.3 & \textbf{22.0} & \textbf{58.1} & \underline{80.2} & \underline{56.7} & \underline{50.0} & \textbf{87.1} & \underline{96.0} 
        & \underline{61.1} 
        & \underline{65.8} \\

        \rowcolor{green!15}

        \multicolumn{2}{l|}{\textbf{CARE$^4$-CLIP (Ours)}} 
        & \textbf{78.1} 
        & \underline{82.7} & \underline{48.9} & \textbf{92.2} & \textbf{70.0} & \textbf{47.7} & \textbf{25.0} & \underline{14.0} & \underline{48.8} & \textbf{82.2} & \textbf{60.3} & \textbf{55.4} & \underline{83.5} & \textbf{98.1} 
        & \textbf{62.2}  
        & \textbf{70.2} \\
        
        \midrule
        \midrule
        \multicolumn{18}{l}{\textbf{$\ell_{\infty}, \varepsilon = 2/255$}} \\
        \cmidrule(lr){1-18}

        \rowcolor{gray!15}
        \multicolumn{2}{l|}{CLIP} 
        & 0.0 
        & 0.0 & 0.0 & 0.0 & 0.0 & 0.0 & 0.0 & 0.0 & 0.0 & 0.0 & 0.1 & 0.0 & 0.0 & 0.0 
        & 0.0 
        & 0.0 \\

        \multicolumn{2}{l|}{TeCoA$^2$-CLIP} 
        & \underline{62.3} 
        & 70.2 & 22.2 & \underline{63.7} & 35.0 & \underline{27.0} & \underline{12.8} & 5.8 & \underline{27.6} & \underline{58.8} & \underline{45.2} & 40.0 & \underline{69.7} & 88.7 
        & \underline{43.6} 
        & \underline{53.0} \\

        \multicolumn{2}{l|}{FARE$^2$-CLIP} 
        & 46.1 
        & \underline{73.0} & \underline{26.0} & 60.3 & \underline{35.6} & 26.7 & 6.2 & \underline{5.9} & \textbf{31.2} & 56.5 & 38.3 & \underline{41.9} & 68.3 & \underline{90.1} 
        & 43.1  
        & 44.6 \\

        \rowcolor{green!15}
        \multicolumn{2}{l|}{\textbf{CARE$^2$-CLIP (Ours)}} 
        & \textbf{64.2}  
        & \textbf{74.3} & \textbf{26.6} & \textbf{73.5} & \textbf{44.9} & \textbf{30.3} & \textbf{13.5} & \textbf{6.9} & 27.5 & \textbf{65.1} & \textbf{48.9} & \textbf{47.7} & \textbf{73.6} & \textbf{92.7} 
        & \textbf{48.1} 
        & \textbf{56.2} \\

        \midrule

        \multicolumn{2}{l|}{TeCoA$^4$-CLIP} 
        & \underline{60.6} 
        & 69.7 & 17.9 & \underline{59.7} & 33.7 & 26.5 & 8.0 & 5.0 & 24.1 & 59.2 & \underline{43.0} & \underline{48.8} & 68.0 & 86.7 
        & 42.3 
        & \underline{51.5} \\

        \multicolumn{2}{l|}{FARE$^4$-CLIP} 
        & 52.4 
        & \textbf{76.7} & \textbf{30.0} & 57.3 & \underline{36.5} & \underline{28.3} & \underline{12.8} & \textbf{8.2} & \textbf{31.3} & \underline{61.6} & 41.6 & \textbf{50.2} & \underline{72.4} & \underline{89.6} 
        & \underline{45.9} 
        & 49.2 \\

        \rowcolor{green!15}
        \multicolumn{2}{l|}{\textbf{CARE$^4$-CLIP (Ours)}} 
        & \textbf{64.5}  
        & \underline{74.5} & \underline{23.9} & \textbf{71.2} & \textbf{46.1} & \textbf{32.2} & \textbf{15.1} & \underline{6.7} & \underline{24.5} & \textbf{64.0} & \textbf{48.1} & 47.9 & \textbf{73.0} & \textbf{92.6} 
        & \textbf{47.7} 
        & \textbf{56.1} \\

        \midrule
        \midrule
        \multicolumn{18}{l}{\textbf{$\ell_{\infty}, \varepsilon = 4/255$}} \\
        \cmidrule(lr){1-18}

        \rowcolor{gray!15}
        \multicolumn{2}{l|}{CLIP} 
        & 0.0 
        & 0.0 & 0.0 & 0.0 & 0.0 & 0.0 & 0.0 & 0.0 & 0.0 & 0.0 & 0.0 & 0.0 & 0.0 & 0.0 
        & 0.0 
        & 0.0 \\

        \multicolumn{2}{l|}{TeCoA$^2$-CLIP} 
        & \underline{37.3} 
        & \underline{57.4} & \underline{6.5} & \underline{31.0} & \underline{17.8} & \underline{14.7} & \underline{7.7} & \underline{1.1} & \textbf{9.8} & \underline{36.7} & \underline{32.8} & 16.0 & \underline{50.3} & \underline{69.2} 
        & \underline{27.0} 
        & \underline{32.2} \\

        \multicolumn{2}{l|}{FARE$^2$-CLIP} 
        & 16.6 
        & 46.6 & 4.8 & 25.9 & 13.9 & 11.7 & 0.5 & 0.6 & 7.1 & 25.6 & 22.5 & \underline{17.2} & 27.9 & 61.7 
        & 20.5  
        & 18.6 \\

        \rowcolor{green!15}
        \multicolumn{2}{l|}{\textbf{CARE$^2$-CLIP (Ours)}} 
        & \textbf{40.3}  
        & \textbf{60.6} & \textbf{7.3} & \textbf{38.0} & \textbf{21.5} & \textbf{17.3} & \textbf{8.1} & \textbf{2.0} & \underline{9.6} & \textbf{40.5} & \textbf{35.7} & \textbf{26.2} & \textbf{53.0} & \textbf{73.4} 
        & \textbf{30.3} 
        & \textbf{35.3} \\

        \midrule

        \multicolumn{2}{l|}{TeCoA$^4$-CLIP} 
        & \underline{44.3} 
        & 60.9 & \underline{8.4} & \underline{37.1} & \underline{21.5} & 16.4 & 6.6 & 2.1 & \underline{12.4} & \underline{41.9} & \underline{34.2} & \underline{44.0} & \underline{55.2} & 74.3 
        & 31.9 
        & \underline{38.1} \\

        \multicolumn{2}{l|}{FARE$^4$-CLIP} 
        & 33.3 
        & \underline{64.1} & \textbf{12.7} & 34.6 & 20.2 & \underline{17.3} & \underline{11.1} & \underline{2.6} & \textbf{12.5} & 40.6 & 30.9 & \textbf{50.2} & 50.7 & \underline{74.4} 
        & \underline{32.4}  
        & 32.9 \\

        \rowcolor{green!15}
        \multicolumn{2}{l|}{\textbf{CARE$^4$-CLIP (Ours)}} 
        & \textbf{47.1} 
        & \textbf{64.6} & \underline{8.4} & \textbf{43.3} & \textbf{26.0} & \textbf{20.3} & \textbf{11.6} & \textbf{3.1} & 12.3 & \textbf{46.7} & \textbf{38.7} & 34.4 & \textbf{59.0} & \textbf{79.8} 
        & \textbf{34.5} 
        & \textbf{40.8} \\
        \bottomrule
    \end{tabular}
\end{table*}

%% file: tables/image_captioning.tex
\setlength{\tabcolsep}{7pt}
\begin{table*}[htbp]
    \centering
    \small
    \caption{
    Image captioning performance with LLaVA~1.5-7B as the language head.
    CIDEr score is reported under clean and adversarial settings
    (\(\ell_\infty\in\{2,4\}/255\)).
    Best and second-best results are shown in \textbf{bold} and
    \underline{underline}, respectively.
    }
    \label{tab:image_captioning_results}

    \begin{tabular}{l|ccc|ccc|ccc}
        \toprule
        & \multicolumn{9}{c}{\textbf{Image Captioning}} \\
        \cmidrule(lr){2-10}

        \multirow{2}{*}{\textbf{Method}}
        & \multicolumn{3}{c|}{\textbf{COCO}}
        & \multicolumn{3}{c|}{\textbf{Flickr30k}}
        & \multicolumn{3}{c}{\textbf{Average}}
        \\
        \cmidrule(lr){2-4}
        \cmidrule(lr){5-7}
        \cmidrule(lr){8-10}

        &
        clean & {$\frac{2}{255}$} & {$\frac{4}{255}$} &
        clean & {$\frac{2}{255}$} & {$\frac{4}{255}$} &
        clean &
        {$\frac{2}{255}$} &
        {$\frac{4}{255}$}
        \\
        \midrule

        \rowcolor{gray!15}
        CLIP &
        115.5 & 4.0 & 3.1 &
        77.5 & 1.6 & 1.0 &
        96.5 &
        2.8 &
        2.05
        \\
        \midrule

        TeCoA$^{2}$ &
        98.4 & 44.2 & 30.3 &
        57.1 & 23.2 & 15.3 &
        77.8 &
        33.7 &
        22.8
        \\

        FARE$^{2}$ &
        \textbf{109.9} & \underline{53.6} & \underline{31.0} &
        \textbf{71.1} & \underline{29.5} & \underline{17.5} &
        \textbf{90.5} &
        \underline{41.6} &
        \underline{24.3}
        \\

        \rowcolor{green!15}

        \textbf{CARE$^{2}$} &
        \underline{107.8} & \textbf{55.6} & \textbf{33.6} &
        \underline{66.6} & \textbf{30.8} & \textbf{19.7} &
        \underline{87.2} &
        \textbf{43.2} &
        \textbf{26.7}
        \\
        \midrule

        TeCoA$^{4}$ &
        88.3 & 50.9 & 35.3 &
        48.6 & 27.9 & 19.5 &
        68.5 &
        39.4 &
        \underline{27.4}
        \\

        FARE$^{4}$ &
        \underline{102.4} & \underline{57.1} & \textbf{40.9} &
        \underline{61.6} & \underline{31.4} & \underline{22.8} &
        \underline{82.0} &
        \underline{44.3} &
        \textbf{31.9}
        \\

        \rowcolor{green!15}

        \textbf{CARE$^{4}$} &
        \textbf{103.6} & \textbf{59.2} & \underline{40.5} &
        \textbf{64.8} & \textbf{33.0} & \textbf{23.2} &
        \textbf{84.2} &
        \textbf{46.1} &
        \textbf{31.9}
        \\
        \bottomrule
    \end{tabular}
\end{table*}

%% file: tables/vqa.tex
\setlength{\tabcolsep}{7pt}
\begin{table*}[htbp]
    \centering
    \small
    \caption{
    Performance on visual question answering benchmarks with LLaVA~1.5-7B as the language head.
    VQA accuracy is reported under clean and adversarial settings
    (\(\ell_\infty\in\{2,4\}/255\)).
    Best results are shown in \textbf{bold} and
    \underline{underline} denotes second-best accuracies.
    }
    \label{tab:vqa_results}

    \begin{tabular}{l|ccc|ccc|ccc}
        \toprule
        & \multicolumn{9}{c}{\textbf{Visual Question Answering}} \\
        \cmidrule(lr){2-10}

        \multirow{2}{*}{\textbf{Method}}
        & \multicolumn{3}{c|}{\textbf{TextVQA}}
        & \multicolumn{3}{c|}{\textbf{VQAv2}}
        & \multicolumn{3}{c}{\textbf{Average}}
        \\
        \cmidrule(lr){2-4}
        \cmidrule(lr){5-7}
        \cmidrule(lr){8-10}

        &
        clean & {$\frac{2}{255}$} & {$\frac{4}{255}$} &
        clean & {$\frac{2}{255}$} & {$\frac{4}{255}$} &
        clean &
        {$\frac{2}{255}$} &
        {$\frac{4}{255}$}
        \\
        \midrule

        \rowcolor{gray!15}
        CLIP &
        37.1 & 0.5 & 0.0 &
        74.5 & 2.9 & 0.0 &
        55.8 &
        1.7 &
        0.0
        \\
        \midrule

        TeCoA$^{2}$ &
        24.1 & 12.1 & 8.8 &
        \underline{66.9} & 33.8 & 21.8 &
        45.5 &
        23.0 &
        15.3
        \\

        FARE$^{2}$ &
        \textbf{31.9} & \underline{14.7} & \underline{9.1} &
        \textbf{71.7} & \underline{34.9} & \underline{23.0} &
        \textbf{51.8} &
        \underline{24.8} &
        \underline{16.1}
        \\

        \rowcolor{green!15}
        \textbf{CARE$^{2}$} &
        \underline{26.8} & \textbf{15.7} & \textbf{9.5} &
        66.7 & \textbf{43.1} & \textbf{27.6} &
        \underline{46.8} &
        \textbf{29.4} &
        \textbf{18.6}
        \\
        \midrule

        TeCoA$^{4}$ &
        20.7 & \underline{12.6} & 9.3 &
        63.2 & \underline{41.0} & \underline{31.7} &
        42.0 &
        26.8 &
        20.5
        \\

        FARE$^{4}$ &
        \textbf{27.6} & \textbf{15.8} & \underline{10.9} &
        \textbf{68.3} & 40.7 & 30.5 &
        \textbf{48.0} &
        \underline{28.3} &
        \underline{20.7}
        \\

        \rowcolor{green!15}
        \textbf{CARE$^{4}$} &
        \underline{23.7} & \textbf{15.8} & \textbf{11.1} &
        \underline{66.3} & \textbf{46.3} & \textbf{34.8} &
        \underline{45.0} &
        \textbf{31.0} &
        \textbf{23.0}
        \\
        \bottomrule
    \end{tabular}
\end{table*}

%% file: tables/harmonization_comparison.tex
\setlength{\tabcolsep}{3pt}
\begin{table}[!t]
    \centering
    \small
    \renewcommand{\arraystretch}{1.12}
    \caption{Average zero-shot image classification accuracies (\%) of TeCoA\textsuperscript{4}, FARE\textsuperscript{4}, CARE\textsuperscript{4} and the two experts at the end of training.}
    \label{tab:harmonization_comparison}
    \fontsize{8.5}{10}\selectfont 
    
    \begin{tabular}{l|ccc|ccc}
        \toprule
        \multirow{2}{*}{\textbf{Method}} & \multicolumn{3}{c|}{\textbf{In-distribution}} & \multicolumn{3}{c}{\textbf{Zero-shot}} \\
        \cmidrule(lr){2-4}\cmidrule(lr){5-7}
        & {clean} & {$\frac{2}{255}$} & {$\frac{4}{255}$} &
        {clean} & {$\frac{2}{255}$} & {$\frac{4}{255}$}
        \\
        \midrule
        TeCoA$^{4}$ &
        75.2 & 60.6 & 44.3 & 54.2 & 42.3 & 31.9
        \\
        
        Alignment expert (TeCoA-based) &
        77.3 & 65.3 & 49.4 & 50.5 & 39.0 & 28.4
        \\
        \midrule
        FARE$^{4}$  &
        70.4 & 52.4 & 33.3 & 61.1 & 45.9 & 32.4
        \\

        Invariance expert (FARE-based) &
        73.1 & 56.9 & 37.6 & 61.4 & 46.9 & 33.3
        \\
        \midrule
        \rowcolor{green!15}
        CARE$^{4}$ &
        78.1 & 64.5 & 47.1 & 62.2 & 47.7 & 34.5
        \\
        
        \bottomrule
    \end{tabular}
\end{table}

%% file: tables/time_comparison.tex
\begin{table}[!t]
\centering
\caption{Training efficiency comparison among different methods.}
\label{tab:training_efficiency}
\setlength{\tabcolsep}{8pt}
\renewcommand{\arraystretch}{1.15}

\begin{tabular}{l|c|c|c}
\toprule
\textbf{Method} 
& \textbf{Memory} 
& \textbf{Time (1 step)} 
& \textbf{Total time (20000 steps)} \\
\midrule

TeCoA 
& 29.1 GB 
& 8.7 sec 
& 48 hours \\

FARE 
& 25.0 GB 
& 9.0 sec 
& 50 hours \\

CARE 
& 52.6 GB 
& 15.0 sec 
& 83 hours \\

\bottomrule
\end{tabular}
\end{table}

%% file: tables/ablation_harmonization.tex
\setlength{\tabcolsep}{3.0pt}
\begin{table}[!t]
    \centering
    \small
    \renewcommand{\arraystretch}{1.12}
    \caption{Average zero-shot image classification accuracies (\%) of ViT-B/32 CARE\textsuperscript{4} using different harmonization loss functions.}
    \label{tab:ablation_harmonization}
    \fontsize{8.5}{10}\selectfont 
    
    \begin{tabular}{l|ccc|ccc}
        \toprule

        \textbf{Loss function} & \multicolumn{3}{c|}{\textbf{In-distribution}} & \multicolumn{3}{c}{\textbf{Zero-shot (Unseen datasets)}} \\
        \cmidrule(lr){2-4}\cmidrule(lr){5-7}
        & {clean} & {$\frac{2}{255}$} & {$\frac{4}{255}$} &
        
        {clean} & {$\frac{2}{255}$} & {$\frac{4}{255}$}
        \\
        \midrule
        
        KL Divergence &
        61.06 & 42.50 & 25.46 & 49.18 & 36.90 & 25.55
        \\
        
        Cosine Similarity &
        61.06 & 42.36 & 25.40 & 49.14 & 36.86 & 25.55
        \\
        
        \bottomrule
    \end{tabular}
\end{table}

%% file: tables/ablation_gaip.tex
\begin{table}[!t]
\centering
\caption{Ablation study of GAIP under clean and adversarial settings. We show average image classification accuracy for In-distribution and Zero-shot datasets, with and without adversarial attack.}
\label{tab:gaip_ablation}
\setlength{\tabcolsep}{5pt}

\begin{tabular}{l|ccc|ccc}
\toprule
& \multicolumn{3}{c|}{\textbf{Clean (\%)}} 
& \multicolumn{3}{c}{\textbf{Robust ($\epsilon=4/255$) (\%)}} \\
\cmidrule(lr){2-4} \cmidrule(lr){5-7}

\textbf{Method}
& \textbf{ID}
& \textbf{ZS}
& \textbf{Total}
& \textbf{ID}
& \textbf{ZS}
& \textbf{Total} \\
\midrule

Cosine similarity
& 61.06
& 49.14
& 110.20
& 25.40
& 25.55
& \textbf{50.95} \\

No GAIP
& 58.14
& 47.24
& 105.38
& 24.86
& 25.90
& 50.76 \\

L2 and CE losses
& 60.12
& 50.61
& \textbf{110.73}
& 21.48
& 24.12
& 45.60 \\

\bottomrule
\end{tabular}
\end{table}

%% file: tables/framework_comparison.tex
\setlength{\tabcolsep}{2.0pt}
\begin{table}[!t]
    \centering
    \small
    \renewcommand{\arraystretch}{1.12}
    \caption{Average image classification accuracies (\%) of ViT-B/32 CARE\textsuperscript{4} using Expert harmonization and using Parameter redistribution.}
    \label{tab:framework_comparison}
    \fontsize{8.0}{10}\selectfont 
    
    \begin{tabular}{l|ccc|ccc}
        \toprule

        \textbf{Harmonization method} & \multicolumn{3}{c|}{\textbf{ID}} & \multicolumn{3}{c}{\textbf{ZS}} \\
        \cmidrule(lr){2-4}\cmidrule(lr){5-7}
        & {clean} & {$\frac{2}{255}$} & {$\frac{4}{255}$} &
        
        {clean} & {$\frac{2}{255}$} & {$\frac{4}{255}$}
        \\
        \midrule
        Parameter redistribution &
        60.52 & 41.16 & 23.70 & 49.22 & 35.93 & 24.99
        \\
        
        Expert harmonization (Ours) &
        \textbf{61.06} & \textbf{42.36} & \textbf{25.40} & 49.14 & \textbf{36.86} & \textbf{25.55}
        \\
        
        \bottomrule
    \end{tabular}
\end{table}